\algnewcommand{\Let}{\textbf{let}\ }
\algnewcommand\algorithmicdef{\textbf{def}}
\algnewcommand\algorithmicby{\textbf{by}}
\newcommand{\myref}[2]{\ref{#1}~\ref{#1:#2}}
\newcommand{\lmat}{\begin{bmatrix}}
\newcommand{\rmat}{\end{bmatrix}}
\newcommand{\RR}{\mathbb{R}}
\newcommand{\ZZ}{\mathbb{Z}}
\newcommand{\eps}{\varepsilon}
\newcommand{\crelu}{\operatorname{CReLU}}
\newcommand{\diag}{\operatorname{diag}}
\newcommand{\ran}{\operatorname{ran}}
\newcommand{\relu}{\operatorname{ReLU}}
\newcommand{\st}{\operatorname{st}}
\newcommand{\PZ}{\mathrm{PZ}}
\newcommand{\heavi}{\mathcal{H}}
\begin{document}

\title{Densely Connected \texorpdfstring{$G$}{G}-invariant Deep Neural Networks with Signed Permutation Representations}

\author{\name Devanshu Agrawal \email dagrawa2@vols.utk.edu \\
       \addr Department of Industrial and Systems Engineering\\
       University of Tennessee\\
       Knoxville, TN 37996, USA
       \AND
       \name James Ostrowski \email jostrows@utk.edu \\
       \addr Department of Industrial and Systems Engineering\\
       University of Tennessee\\
       Knoxville, TN 37996, USA}

\editor{My editor}

\maketitle

\begin{abstract}
We introduce and investigate, for finite groups $G$, $G$-invariant deep neural network ($G$-DNN) architectures with ReLU activation that are densely connected-- 
i.e., include all possible skip connections. 
In contrast to other $G$-invariant architectures in the literature, the preactivations of the$G$-DNNs presented here are able to transform by \emph{signed} permutation representations (signed perm-reps) of $G$. 
Moreover, the individual layers of the $G$-DNNs are not required to be $G$-equivariant; 
instead, the preactivations are constrained to be $G$-equivariant functions of the network input in a way that couples weights across all layers. 
The result is a richer family of $G$-invariant architectures never seen previously. 
We derive an efficient implementation of $G$-DNNs after a reparameterization of weights, 
as well as necessary and sufficient conditions for an architecture to be ``admissible''-- 
i.e., nondegenerate and inequivalent to smaller architectures. 
We include code that allows a user to build a $G$-DNN interactively layer-by-layer, 
with the final architecture guaranteed to be admissible. 
We show that there are far more admissible $G$-DNN architectures than those accessible with the ``concatenated ReLU'' activation function from the literature. 
Finally, we apply $G$-DNNs to two example problems---%
(1) multiplication in $\{-1, 1\}$ (with theoretical guarantees) and (2) 3D object classification---%
finding that the inclusion of signed perm-reps significantly boosts predictive performance compared to baselines with only ordinary (i.e., unsigned) perm-reps.
\end{abstract}

\begin{keywords}
  deep learning, group theory, neural network, skip connection, symmetry
\end{keywords}

\section{Introduction}
\label{sec:intro}

When fitting a deep neural network (DNN) to a target function that is known to be $G$-invariant with respect to a group $G$, it only makes sense to enforce $G$-invariance on the DNN as prior knowledge. 
With the rise of geometric deep learning~\citep{bronstein2021geometric}, this is becoming an increasingly common practice, 
finding applications in various domains including computer vision~\citep{veeling2018rotation, esteves2018learning, musallam2022leveraging}, the physical sciences~\citep{luo2021gauge, atz2021geometric, kaba2022equivariant, agrawal2023group}, and genomics~\citep{mallet2021reverse, zhou2022towards}. 
In general-purpose $G$-invariant and $G$-equivariant architectures such as $G$-equivariant convolutional neural networks ($G$-CNNs)~\citep{cohen2016group} and $G$-equivariant graph neural networks~\citep{maron2019invariant}, 
it is standard to require every linear layer to be $G$-equivariant. 
Moreover, in case of the rectified linear unit (ReLU) activation, every linear layer is $G$-equivariant only with respect to permutation representations. 
It is commonly assumed that the $G$-invariant architectures constructed in this way are sufficient for consideration~\citep{cohen2019general, finzi2021practical}, 
but it is unclear if this layerwise construction covers all possible ways of enforcing $G$-invariance on a fully-connected feedforward DNN, 
and it remains an open conjecture~\citep{kondor2018generalization}. 
While these architectures and others are certainly sufficient for universal approximation of $G$-invariant functions~\citep{maron2019universality, ravanbakhsh2020universal, kicki2020computationally}, 
the way in which $G$-invariance is enforced is an aspect of the neural architecture and thus likely plays a key role in determining the inductive bias of the model and hence its generalization power on a given problem.

It has recently been discovered that there are, in fact, more ways to enforce $G$-invariance on shallow ReLU neural networks than just permutations on the hidden neurons~\citep{agrawal2022classification}. 
In their work, \citet{agrawal2022classification} exploit the identity
\begin{equation} \label{eq:relu}
\relu(-x) = \relu(x) - x
\end{equation}
to show that $G$-invariance can be achieved even if $G$ acts on the hidden neurons via a ``signed permutation representation'' (signed perm-rep), 
resulting in novel $G$-invariant shallow architectures previously unknown. 
In an attempt towards a generalization to deep architectures, we observe that the linear term in Eq.~\eqref{eq:relu} can be interpreted as a skip connection; 
this suggests that skip connections may be the key to novel deep $G$-invariant architectures.

In this paper, as a partial generalization of the work of \citet{agrawal2022classification}, we investigate $G$-invariant deep neural network ($G$-DNN) architectures that are ``densely connected''-- i.e., include all possible skip connections. 
We note that (non-$G$-invariant) densely connected neural networks exist in the literature and have found immense success especially in medical imaging~\citep{huang2017densely}. 
We use ReLU activation, and every preactivation layer is still a $G$-equivariant function of the network input; 
however, in contrast to previous architectures such as the $G$-CNN, the individual weight matrices of a $G$-DNN need not be $G$-equivariant. 
Instead, in each layer, only the concatenation of the weight matrix with all skip connections from previous layers need be $G$-equivariant. 
This dense structure allows us to use \emph{signed} perm-reps to enforce$G$-equivariance on the preactivation layers, 
thus granting us access to a much larger family of $G$-invariant architectures than seen previously. 

Implementation of $G$-DNNs is nontrivial, 
as the group representation by which the weight matrix (concatenated with all skip connections) in each layer transforms is itself a function of the weights in previous layers. 
That is, due to the skip connections and in contrast to previous architectures such as the $G$-CNN, $G$-invariance is enforced in a way that couples weights across layers. 
We show, however, that there is a reparameterization of $G$-DNNs in which the equivariance conditions of the weight matrices decouple and admit a simple implementation (Thm.~\myref{thm:V}{a}). 
For efficiency, rather than transforming back to the original weights, we express and implement the forward pass of a $G$-DNN directly in terms of these reparameterized weights (Thm.~\myref{thm:V}{b}).

There is additional literature suggesting the potential of $G$-DNNs. 
Signed perm-reps have been used previously as design components in $G$-invariant architectures and have proved beneficial especially in computer vision~\citep{cohen2017steerable}. 
These previous works incorporate signed perm-reps by replacing ReLU with the \emph{concatenated ReLU} (CRELU) activation function~\citep{shang2016understanding} defined as
\begin{equation} \label{eq:crelu}
\crelu(x) = \relu\left( \lmat x \\ -x \rmat \right),
\end{equation}
which sends negations of its input to transpositions of its output. 
CReLU was originally introduced to capture the empirical observation that certain feature maps in convolutional neural networks trained on images tend to pair up~\citep{shang2016understanding}. 
Since then, besides its use in $G$-invariant architectures, CReLU has been shown to be beneficial for the trainability of networks with skip connections-- 
e.g., its compatibility with batch normalization~\citep{shang2017exploring} and its role in the ``looks linear'' weight initialization~\citep{balduzzi2017shattered}. 
The CReLU literature thus suggests that $G$-invariance, signed perm-reps, and skip connections may play well together. 
and we claim that the $G$-DNN architectures introduced in this paper are a culmination of these ideas. 
In contrast to the literature, however, our perspective is top-down, 
and we show in this paper that a systematic study of $G$-DNN architectures leads to the discovery of novel architecture designs not accessible with CReLU alone (Ex.~\ref{ex:crelu}).

Finally, this work contributes to the ultimate goal of $G$-invariant neural architecture search ($G$-NAS), 
which purports to apply search methods over $G$-invariant architectures. 
In addition to finding good $G$-invariant architectures for a given problem, $G$-NAS would relieve practitioners from having to learn or perform specialized mathematics (in particular, group theory), thereby making $G$-invariant deep learning more accessible and applicable. 
Previous works have already begun to explore $G$-NAS with promising results~\citep{basu2021autoequivariant, maile2022equivariance}, 
but these works only operate within a limited region of $G$-invariant architecture space; 
indeed, \citet{agrawal2022classification} argue that a more extensive $G$-NAS first requires the characterization of all $G$-invariant architectures as well as the so-called network morphisms between them. 
There has been work characterizing $G$-invariant architectures from a graph-theoretic perspective but not exploiting identities of the activation function~\citep{ravanbakhsh2017equivariance}; 
work on the classification of $G$-invariant shallow ReLU architectures~\citep{agrawal2022classification}; 
and even work introducing novel $G$-invariant architectures based on a sum-product layer~\citep{kicki2021new}. 
Now, our discovery of novel $G$-DNN architectures based on Eq.~\eqref{eq:relu} pushes the horizon on $G$-invariant architecture space further back, revealing new regions of exploration. 
We prove Thm.~\ref{thm:phi}, which lets us count numbers of distinct non-degenerate $G$-DNNs, in terms of which we gain intuition about how many new architectures we are now able to access.

The remainder of the paper is organized as follows: 
In Sec.~\ref{sec:rho}, we review signed perm-reps and state our central hypothesis with some theoretical support. 
In Sec.~\ref{sec:gdnn}, we introduce and describe the implementation of $G$-DNN architectures. 
We additionally derive necessary and sufficient conditions for a $G$-DNN architecture to be ``admissible'' or nondegenerate (Thm.~\ref{thm:phi}), 
in the sense that (1) no neuron is missing an input and (2) no two ReLU neurons can be combined into a single ReLU neuron or a skip connection. 
We include code%
\footnote{Code for our implementation and for reproducing all results in this paper is available at: \url{https://github.com/dagrawa2/gdnn_code}.} %
that allows a user to build a $G$-DNN interactively such that the final architecture is guaranteed to be admissible. 
We also verify that batch normalization placed after ReLU is compatible with $G$-DNNs out-of-the-box. 
In Sec.~\ref{sec:examples}, we demonstrate that $G$-DNNs go well-beyond CReLU-based architectures, 
and we test $G$-DNNs on two examples---(1) a simple mathematical function and (2) a real-world computer vision problem---and demonstrate that signed perm-reps can, in fact, carry useful inductive bias. 
Finally, in Sec.~\ref{sec:conclusion}, we end with conclusions, implications, and future outlook.

\section{Signed permutation representations}
\label{sec:rho}

\subsection{Preliminaries}

We begin by introducing some notions and notation that will be used throughout this paper. 
This section is an abbreviation of Secs.~2.1-2.2 of \citet{agrawal2022classification}, 
and we refer readers to all of Sec.~2 of that paper for details of the below material.

Throughout this paper, let $G$ be a finite group of $m\times m$ orthogonal matrices. 
Let $\operatorname{P}(n)$ be the group of $n\times n$ permutation matrices 
and $\operatorname{Z}(n)$ the group of $n\times n$ diagonal matrices with diagonal entries $\pm 1$. 
Let $\PZ(n)$ be the group of \emph{signed permutations}-- i.e., 
the group of all permutations and reflections of the standard orthonormal basis $\{e_1,\ldots,e_n\}$. 
For interested readers, this group is the semidirect product $\operatorname{P}(n)\ltimes\operatorname{Z}(n)$, 
and it is also called the hyperoctahedral group in the literature~\citep{baake1984structure}.

A \emph{signed permutation representation} (signed perm-rep) of degree $n$ of $G$ is a homomorphism $\rho:G\mapsto\PZ(n)$. 
Two signed perm-reps $\rho,\rho^{\prime}$ are said to be \emph{equivalent} or \emph{conjugate} if there exists $A\in\PZ(n)$ such that $\rho^{\prime}(g) = A\rho(g)A^{-1}\forall g\in G$-- 
i.e., if they are related by a change of basis. 
A signed perm-rep $\rho$ is said to be \emph{reducible} if it is equivalent to a direct sum of signed perm-reps of smaller degrees-- 
i.e., if there exists $A\in\PZ(n)$ such that $A\rho(\cdot)A^{-1}$ is simultaneously block-diagonal with at least two blocks. 
The signed perm-rep is said to be \emph{irreducible} (signed perm-irrep for short) otherwise.%
\footnote{%
A useful characterization is that a signed perm-rep $\rho$ is irreducible iff for every $i,j=1,\ldots,n$, there exists $g\in G$ such that $\rho(g)e_i = \pm e_j$.%
}

The signed perm-irreps of $G$ can be completely classified up to equivalence in terms of certain pairs of subgroups of $G$~\citep[][Thm.~1]{agrawal2022classification}. 
For every pair of subgroups $K\leq H\leq G$, $|H:K|\leq 2$, define the signed perm-rep $\rho_{HK}:G\mapsto\PZ(n)$, $n = |G/H|$, to be the induced representation
\[ \rho_{HK} = I_H^G \sigma, \mbox{ where } \sigma:H\mapsto\{-1, 1\}\mid \ker(\sigma) = K. \]
Then $\rho_{HK}$ is irreducible, and every signed perm-irrep is equivalent to some $\rho_{HK}$. 
Moreover, $\rho_{HK}$ and $\rho_{H^{\prime}K^{\prime}}$ are equivalent iff $(H^{\prime}, K^{\prime}) = (gHg^{-1}, gKg^{-1})$ for some $g\in G$. 
We can thus always understand a ``signed perm-irrep'' to mean $\rho_{HK}$ for some appropriate subgroups $H$ and $K$, 
and conversely we will always understand the notation $\rho_{HK}$ to mean the above construction.

For a more explicit but equivalent construction of $\rho_{HK}$, let $\{g_1,\ldots,g_n\}$ be a transversal of $G/H$. 
If $|H:K| = 1$ (i.e., $H=K$), then we define $\rho_{HK}(g)e_i = e_j$ iff $gg_i H = g_j H$; 
i.e., $\rho_{HK}$ is just defined in terms of its permutation action on $G/H$. 
On the other hand, suppose $|H:K|=2$. Then we have the quotient group $H/K = \{K, hK\}$. 
Define $e_{-i} = -e_i$ and $g_{-i} = g_i h$ for $i\in\{1,\ldots,n\}$. 
Then we define $\rho_{HK}(g)e_i = e_j$ iff $gg_i K = g_j K$ for $i,j\in\{\pm 1,\ldots,\pm n\}$. 
Intuitively, $\rho_{HK}$ still acts on $G/H$ via permutations; 
however, if the cosets in $H/K$ are swapped under this action, then a sign flip is incurred in the representation. 
For more details on these constructions, see Sec.~2 of \citet{agrawal2022classification}. 

Every signed perm-irrep is either type 1 or type 2. 
A signed perm-rep is \emph{type 1} if it is equivalent to an \emph{ordinary permutation representation} (ordinary perm-rep) $\pi:G\mapsto\operatorname{P}(n)$; 
it is \emph{type 2} otherwise. 
Sign flips in a type 1 signed perm-rep are thus artifacts as they can be removed by a change of basis. 
An important characterization is that a signed perm-irrep $\rho_{HK}$ has type $|H:K|$.

Finally, as general notation, throughout this paper let $I_n$ denote the $n\times n$ identity matrix and $\vec{0}_n$ (resp. $\vec{1}_n$) the $n$-dimensional vector with all elements $0$ (resp. $1$).

\subsection{Central hypothesis}
\label{sec:central}

Previous works in the equivariant deep learning literature, to our knowledge, primarily employ type 1 signed perm-reps---and in particular, ordinary perm-reps---to enforce layerwise $G$-equivariance in ReLU networks. 
The goal of this paper is to show that type 2 signed perm-reps can also be implemented for this purpose 
(although it is trickier as it requires layers to be coupled), 
and the central hypothesis of this work is that type 2 signed perm-reps can indeed be beneficial for DNN performance. 
In this section, we motivate this hypothesis with theory and discussions that suggest that type 2 signed perm-reps can help boost expressive power without necessarily sacrificing generalization power. 
We note that the CReLU activation function has previously been used to incorporate certain type 2 signed perm-reps (see Sec.~\ref{sec:intro}), but we take a top-down approach that lets us study all such architectures.

Our first theorem states that two inequivalent signed perm-irreps induced from different reps of the same subgroup $H\leq G$ correspond to $G$-equivariant matrices that are in a sense orthogonal.%
\footnote{All proofs can be found in the supplementary material.}

\begin{theorem}[Corollary to Prop.~6 of \citet{agrawal2022classification}] \label{thm:orthogonal}
Let $\rho_{HK_1}$ and $\rho_{HK_2}$ be inequivalent signed perm-irreps of degree $n$ of $G$. 
For each $i\in\{1,2\}$, let $W_i\in\RR^{n\times m}$ such that $\rho_{HK_i}(g)W_i = W_i g\forall g\in G$. 
Then there exists $P\in\operatorname{P}(n)$ such that
\[ \diag(PW_1W_2^{\top}) = 0. \]
\end{theorem}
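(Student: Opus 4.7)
The plan is to invoke Frobenius reciprocity to parameterize each $G$-equivariant $W_i$ by a single ``seed'' $H$-equivariant row vector, compute the diagonal of $W_1 W_2^\top$ explicitly in those terms, and then invoke orthogonality of distinct $H$-isotypic components in the orthogonal $G$-representation on $\RR^m$ to conclude.

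First, I would fix a single transversal $\{g_1,\ldots,g_n\}$ of $G/H$ to be used in the construction of both $\rho_{HK_1}$ and $\rho_{HK_2}$ (possible since both arise from the same subgroup $H$). Frobenius reciprocity then supplies a linear isomorphism between the space of $G$-equivariant maps $W_i:\RR^m\to\RR^n$ and the space of $H$-equivariant row vectors $w_i\in\RR^{1\times m}$ satisfying $w_i h = \sigma_i(h)\,w_i$ for all $h\in H$, where $\sigma_i:H\to\{-1,1\}$ is the sign character with $\ker\sigma_i = K_i$. A routine verification, tracing $W_i(v)$ as a function $G\to\RR$ and evaluating at each coset representative, shows that the $j$-th row of $W_i$ equals $w_i g_j^{-1}$; using the orthogonality of $g_j\in G$ we then get
\[
(W_1 W_2^\top)_{jj} = (w_1 g_j^{-1})(w_2 g_j^{-1})^\top = w_1 g_j^{-1} g_j w_2^\top = w_1 w_2^\top,
\]
so every diagonal entry of $W_1 W_2^\top$ equals the single scalar $w_1 w_2^\top$, independent of $j$.

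Finally, I would show $w_1 w_2^\top = 0$ by character orthogonality. Because $\rho_{HK_1}\not\equiv\rho_{HK_2}$ and both are induced from the same $H$, we must have $K_1\neq K_2$, and hence $\sigma_1\neq\sigma_2$ as characters on $H$. Picking $h\in H$ with $\sigma_1(h)\neq\sigma_2(h)$, the $H$-equivariance of $w_1$ yields $w_1 h w_2^\top = \sigma_1(h)\,w_1 w_2^\top$, while the $H$-equivariance of $w_2$ (rewritten as $h w_2^\top = \sigma_2(h)\,w_2^\top$, using orthogonality of $h$ and $\sigma_2\in\{-1,1\}$) yields $w_1 h w_2^\top = \sigma_2(h)\,w_1 w_2^\top$; subtraction forces $w_1 w_2^\top = 0$. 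Hence $\diag(W_1 W_2^\top) = \vec{0}_n$, and $P=I_n$ suffices. The main obstacle is carefully verifying the Frobenius-reciprocity row formula $(W_i)_{j,\cdot} = w_i g_j^{-1}$ in the type 2 case, where the basis conventions for signed perm-irreps require attention to ensure no spurious signs appear on the diagonal; once confirmed, the entire argument collapses to the standard orthogonality of inequivalent $H$-isotypic components in an orthogonal $G$-representation.
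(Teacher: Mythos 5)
Your proposal is correct and follows essentially the same route as the paper's proof: both arguments reduce $W_i$ to an orbit of rows $g_j w_i$ generated by a single seed vector $w_i$, observe that every diagonal entry of (a row-permuted) $W_1W_2^{\top}$ equals the single scalar $w_1^{\top}w_2$, and then conclude that this scalar vanishes. The only difference is that where the paper cites Thm.~4b and Prop.~6 of \citet{agrawal2022classification} for the row structure and for $w_1^{\top}w_2=0$, you derive both inline (via Frobenius reciprocity and the $\sigma_1\neq\sigma_2$ character argument), which is a sound self-contained substitute; your stronger claim that $P=I_n$ suffices is valid only under the convention that both irreps are built from a common transversal, but the theorem's existential $P$ covers the general case.
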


Theorem~\ref{thm:orthogonal} will be important in Sec.~\ref{sec:examples}, where we will use it to construct a key baseline $G$-DNN architecture for our experiments. 
We discuss Thm.~\ref{thm:orthogonal} further in the context of an example group $G$ later in this section.

Our second theorem describes how a signed perm-rep can be ``unraveled'' into an ordinary perm-rep of twice the degree, 
by essentially mapping sign flips to transpositions of two dimensions. 
This unraveling is particularly meaningful for type 2 signed perm-irreps, which remain irreducible even after unraveling; 
in contrast, type 1 signed perm-irreps merely unravel into reducible copies of themselves. 
Let $\heavi$ be the elementwise Heaviside step function, where we let $\heavi(0) = 0$.

\begin{theorem} \label{thm:signed2ordinary}
Let $\rho:G\mapsto\PZ(n)$ be a signed perm-rep. 
Then:
\begin{enumerate}[label={(\alph*)}]
\item \label{thm:signed2ordinary:a}
The function%
\footnote{In the expression for $\pi_{\rho}$, the operation $\otimes$ is the Kronecker product.}
\[ \pi_{\rho}(g) = \heavi\left(\lmat 1 & -1\\ -1 & 1\rmat \otimes \rho(g) \right) \]
defines an ordinary perm-rep $\pi_{\rho}:G\mapsto\operatorname{P}(2n)$.
\item \label{thm:signed2ordinary:b}
Let $W\in\RR^{n\times m}$ such that $\rho(g)W = Wg\forall g\in G$. 
Then
\[ \pi_{\rho}(g)\lmat W\\ -W\rmat = \lmat W\\ -W\rmat g\forall g\in G. \]
\item \label{thm:signed2ordinary:c}
Suppose $\rho$ is irreducible. 
Then $\pi_{\rho}$ is irreducible iff $\rho$ is type 2.
\item \label{thm:signed2ordinary:d}
Suppose $\rho$ is type 2 irreducible and $\rho = \rho_{HK}$. 
Then $\pi_{\rho} = \rho_{KK}$.
\end{enumerate}
\end{theorem}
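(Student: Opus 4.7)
The plan is to handle the four parts in the order (a), (b), (d), (c), using (d) to streamline the proof of (c). Parts (a) and (b) are essentially computational. For (a), the key observation is that
\[ \heavi\left(\lmat 1 & -1 \\ -1 & 1 \rmat \otimes A\right) = \lmat A_+ & A_- \\ A_- & A_+ \rmat, \]
where $A_\pm := \heavi(\pm A)$ are the elementwise positive and negative parts of $A\in\PZ(n)$. One verifies directly that $A \mapsto \lmat A_+ & A_- \\ A_- & A_+ \rmat$ is a homomorphism $\iota:\PZ(n)\to\operatorname{P}(2n)$ --- using that the supports of $A_+$ and $A_-$ are disjoint so no cancellation occurs in the product --- or, equivalently, one recognizes $\iota$ as the standard faithful embedding realizing the action of $\PZ(n)$ on $\{\pm e_1,\ldots,\pm e_n\}$. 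Then $\pi_\rho = \iota\circ\rho$ is an ordinary perm-rep. For (b), using $\rho(g) = \rho(g)_+ - \rho(g)_-$, I would compute
\[ \pi_\rho(g) \lmat W \\ -W \rmat = \lmat (\rho(g)_+-\rho(g)_-)W \\ (\rho(g)_- - \rho(g)_+)W \rmat = \lmat \rho(g)W \\ -\rho(g)W \rmat = \lmat W \\ -W \rmat g, \]
where the last equality uses the equivariance hypothesis on $W$.

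For part (d), I would construct an explicit $G$-equivariant bijection from the index set $\{(i,\pm):i=1,\ldots,n\}$ of $\pi_\rho$ to the coset space $G/K$. Using the transversal $\{g_1,\ldots,g_n\}$ of $G/H$ and a generator $h\in H\setminus K$ of $H/K$ from the preliminaries, the $2n$ cosets of $K$ in $G$ are exactly $\{g_i K, g_i h K\}_{i=1}^n$, so setting $\phi(i,+) = g_i K$ and $\phi(i,-) = g_i h K$ is a bijection. Verifying that $\phi$ intertwines $\pi_\rho$ with $\rho_{KK}$ splits into four cases by the sign $s$ of the input basis vector $(i,s)$ and by whether $\rho(g) e_i$ equals $+e_j$ or $-e_j$. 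The $s=+$ cases are immediate from the definition of $\rho_{HK}$. The $s=-$ cases require computing $g\cdot g_i h K$: writing $gg_i = g_j h^\epsilon k$ for $k\in K$ and $\epsilon\in\{0,1\}$ (where $\epsilon$ tracks the sign of $\rho(g)e_i$), one gets $g\cdot g_i h K = g_j h^{\epsilon+1}(h^{-1}kh)K$, and reducing this to the expected $g_j h^{1-\epsilon} K$ uses two facts: $K$ is normal in $H$ (automatic from $|H:K|=2$), so $h^{-1}kh\in K$; and $h^2\in K$ (since $H/K$ has order $2$), which handles $\epsilon=1$.

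For part (c), I would use that an ordinary perm-rep is irreducible iff its action on basis vectors is transitive. Since $\rho$ is irreducible, by the footnote characterization, for every $i,j$ there exists $g$ with $\rho(g)e_i = \pm e_j$; hence the $\pi_\rho$-orbit of $(1,+)$ meets at least one of $(j,+),(j,-)$ for every $j$. Thus $\pi_\rho$ has either one orbit or exactly two, and transitivity holds iff the orbit also contains $(1,-)$, iff some $g\in G$ satisfies $\rho(g)e_1 = -e_1$. Unpacking the definition of $\rho = \rho_{HK}$, this last condition holds iff the stabilizer $H$ of $e_1$ (up to sign) contains an element mapping to the nontrivial coset of $K$ in $H$, iff $|H:K|=2$, iff $\rho$ is type $2$. (Alternatively, given (d), irreducibility of $\pi_\rho = \rho_{KK}$ in the type 2 case is automatic from the classification.)

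The hard part will be the case analysis in (d), specifically the reduction of $g_j h^{\epsilon+1}(h^{-1}kh)K$: it is the only place where the structural hypotheses $|H:K|=2$ are essential (via both normality of $K$ in $H$ and $h^2\in K$), and getting the sign bookkeeping to line up with the definition of $\rho_{KK}$ acting on the transversal-indexed cosets requires care. Once these identities are in hand, every other step in the theorem is routine verification.
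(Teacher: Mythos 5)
Your proof is correct, and on parts (a) and (b) it coincides with the paper's: both rest on the same block identity for $\pi_{\rho}(g)$ with blocks $\heavi(\pm\rho(g)) = \tfrac{1}{2}(\pi(g)\pm\rho(g))$, and the homomorphism check is the same disjoint-support computation (the paper verifies it by expanding the blocks algebraically; you package it as the standard embedding $\PZ(n)\hookrightarrow\operatorname{P}(2n)$). Parts (c) and (d) genuinely diverge. For (d), the paper computes the point stabilizer $\st_{\pi_{\rho}}(\omega_1)=\st_{\rho}(e_1)=K$ and appeals to the fact that a transitive ordinary perm-rep is determined up to equivalence by a point stabilizer, so it implicitly leans on (c) for transitivity; your explicit equivariant bijection sending $(i,+)\mapsto g_iK$ and $(i,-)\mapsto g_ihK$ is more self-contained---it needs neither (c) nor the classification---at the price of the four-case bookkeeping with $h^2\in K$ and $K\trianglelefteq H$, which you carry out correctly. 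For (c), the paper argues the two directions separately (type $1$ gives the visibly block-diagonal $\pi\oplus\pi$; type $2$ gives transitivity by a two-case computation), whereas your single chain of equivalences through ``the orbit of $(1,+)$ contains $(1,-)$'' handles both directions at once and makes the role of the sign-flipping element transparent. The one step worth making explicit in your (c) is the reduction to the standard form $\rho=\rho_{HK}$: both the type of $\rho$ and the irreducibility of $\pi_{\rho}$ are invariant under conjugation by $\PZ(n)$ (conjugating $\rho$ by $A$ conjugates $\pi_{\rho}$ by the permutation matrix $\iota(A)$ from your part (a)), so the criterion ``some $g$ sends $e_1$ to $-e_1$'' may indeed be checked on $\rho_{HK}$; the paper's own proof glosses over the same point when it writes $\pi=\rho$ for type $1$ reps.
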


\begin{figure}
\centering
\begin{minipage}{\textwidth}
\begin{minipage}{0.3\textwidth}
$W_{(6, 1)}$

\includegraphics[trim={0 0 0 0}, clip, width=0.9\textwidth]{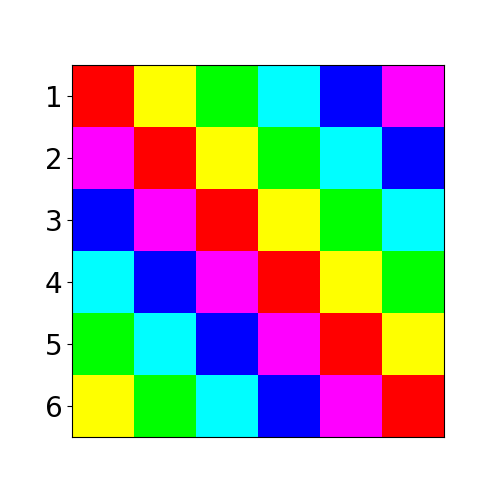}
\end{minipage}%
\begin{minipage}{0.3\textwidth}
$W_{(3, 1)}$ 

\includegraphics[trim={0 0 0 0}, clip, width=0.9\textwidth]{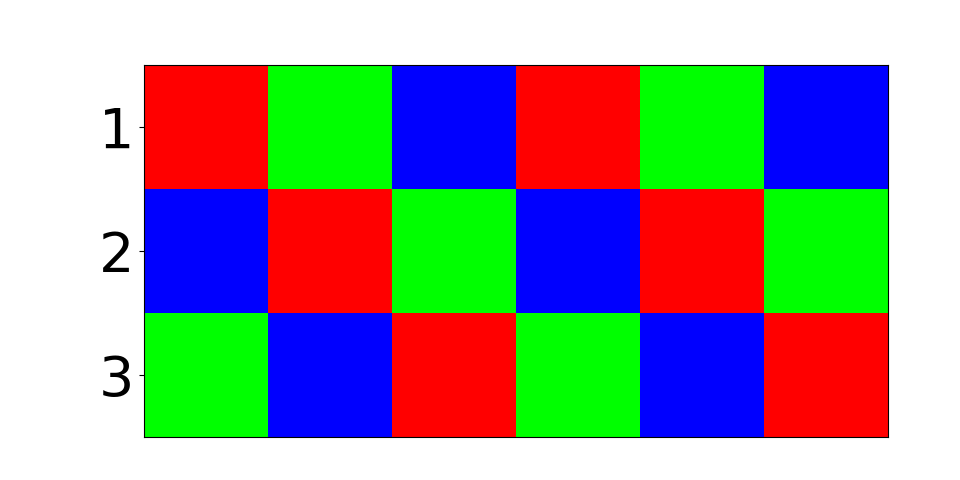}
\end{minipage}%
\begin{minipage}{0.3\textwidth}
$W_{(3, 2)}$

\includegraphics[trim={0 0 0 0}, clip, width=0.9\textwidth]{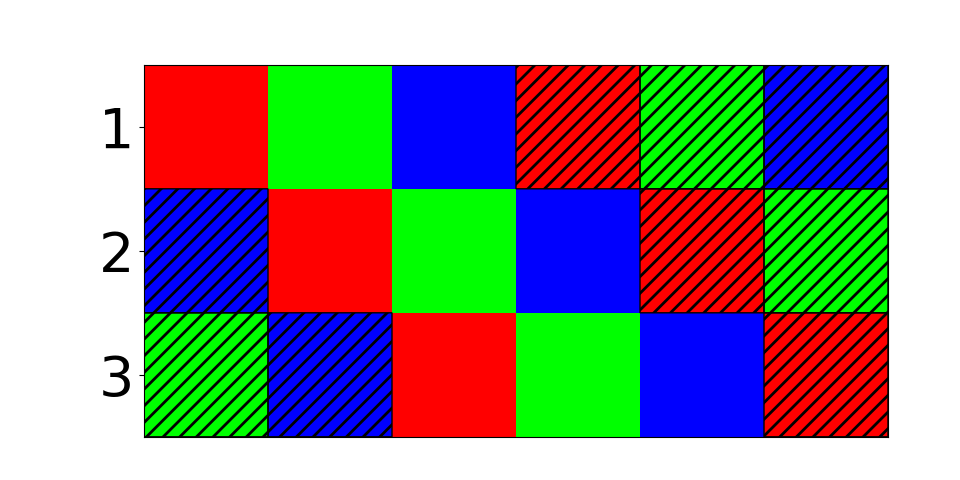}
\end{minipage}
\end{minipage}
\caption{\label{fig:patterns} %
Weightsharing patterns of equivariant matrices for three example signed perm-irreps of the group $G$ of $6\times 6$ cyclic permutation matrices 
(see main text for details). 
In each pattern, weights of the same color and texture (solid vs. hatched) are constrained to be equal; 
weights of the same color but different texture are constrained to be opposites 
(colors should not be compared across different matrices).
}
\end{figure}

To understand Thms.~\ref{thm:orthogonal}-\ref{thm:signed2ordinary} and their implications for the expressive and generalization powers of a $G$-DNN with type 2 signed perm-reps, 
consider the example group $G$ of all $6\times 6$ cyclic permutation matrices 
(see Sec.~4.1 of \citet{agrawal2022classification} for a detailed discussion of this example). 
For this example, the signed perm-irreps of $G$ are completely characterized in terms of their degrees and types; 
we thus let $\rho_{(n, t)}$ denote the irrep $\rho_{HK}$ where $n = \frac{6}{|H|}$ is the degree and $t=|H:K|$ is the type. 
Alternatively, if $G\cong\ZZ_6$, then $\rho_{(n, t)}$ denotes the irrep $\rho_{HK}$ where $H\cong\ZZ_{\frac{6}{n}}$ and $K\cong\ZZ_{\frac{6}{nt}}$. 
In this notation, the signed perm-irreps of $G$ are $\rho_{(6,1)}$, $\rho_{(3,1)}$, $\rho_{(3,2)}$, $\rho_{(2,1)}$, $\rho_{(1,1)}$, and $\rho_{(1,2)}$.

For each irrep $\rho_{(n, t)}$, let $W_{(n, t)}\in\RR^{n\times m}$ such that $\rho_{(n,t)}W_{(n,t)} = W_{(n,t)}g\forall g\in G$. 
The irreps $\rho_{(3,1)}$ and $\rho_{(3,2)}$ satisfy the hypotheses of Thm.~\ref{thm:orthogonal}, 
and indeed we see---in terms of their weightsharing patterns---each row of $W_{(3,1)}$ is orthogonal to the corresponding row of $W_{(3,2)}$ (Fig.~\ref{fig:patterns}). 
In practice, this means if we were given a dataset sampled from a $G$-invariant shallow neural network ($G$-SNN)
\[ f(x) = a\vec{1}^{\top}\relu(Wx+b\vec{1}) + c^{\top}x + d, \]
with $W=W_{(3,2)}$ as the ground truth weight matrix, 
then a $G$-SNN with weight matrix  with the weightsharing pattern of $W_{(3,1)}$ would not have the expressive power to fit the ground truth, 
regardless of the number of ``channels'' (i.e., independent copies of $W_{(3,1)}$) used. 
With a limited budget of three hidden neurons per channel, type 2 signed perm-reps thus help to increase expressive power.

On the other hand, if we double our budget of hidden neurons, then Thm.~\ref{thm:signed2ordinary} suggests $W_{(6,1)}$ has the capacity to express $W_{(3,2)}$; 
indeed, if we constrain the first row of $W_{(6,1)}$ to have the same weightsharing pattern as the first row of $W_{(3,2)}$ (Fig.~\ref{fig:patterns}), 
then $W_{(6,1)}$ effectively becomes equivalent to $W_{(3,2)}$. 
The problem is, however, that $W_{(6,1)}$ can similarly be constrained to match $W_{(3,1)}$, 
and with probability $1$ (e.g., under a Gaussian), $W_{(6,1)}$ is equivalent to neither $W_{(3,1)}$ nor $W_{(3,2)}$. 
Thus, while this approach allows us to use a type 1 signed perm-rep with the capacity to express the ground truth, 
it comes at the cost of generalization power, 
as $W_{(6,1)}$ may have multiple configurations consistent with a finite training set.

To ensure these concepts are made clear, we present one more very simple example. 
For any even input dimension $n\geq 2$, consider the $G$-SNN architectures
\[ f_i(x) = \vec{1}^{\top}\relu(W_i x) - \frac{1}{2}\vec{1}^{\top}W_i x\forall i\in\{1,2,3\}, \]
where
\begin{align*}
W_1 &= \lmat u & u & \cdots & u & u \rmat \\
W_2 &= \lmat u & -u & \cdots & u & -u \rmat \\
W_3 &= \lmat u & v & \cdots & u & v \rmat,
\end{align*}
and where $\vec{1}$ is a vector of $1$'s of the appropriate length (either 1 or 2). 
Here $G$ is the group of cyclic permutations on $n$ elements. 
To fix a global scale, without loss of generality, let us set $u=1$. 
Now suppose we take the type 2 architecture $f_2$ as the ground truth, and draw a sample dataset. 
We then observe that the type 1 architecture $f_1$ does not have the capacity to fit this dataset; 
in fact, $f_1$ and $f_2$ have orthogonal level sets. 
On the other hand, the unraveled architecture $f_3$ does have the capacity to fit the data, 
but it is a weaker model compared to $f_2$ (with respect to this dataset) as it must learn $v=-u$ from the data. 

In sum, type 2 signed perm-reps may help to refine the range of expressive powers available to us; 
they help to express functions different from type 1 signed perm-reps of the same degree 
without sacrificing generalization power the way larger type 1 signed perm-reps would.

\section{\texorpdfstring{$G$}{G}-invariant deep neural networks}
\label{sec:gdnn}

\subsection{Parameterization redundancies}
\label{sec:parameter}

In this section, we define densely connected deep neural networks, or simply deep neural networks, 
and we list some of the so-called parameterization redundancies of such networks that will be key to enforcing $G$-invariance. 
All notation introduced here will persist throughout the paper.

Let $f:\RR^m\mapsto\RR$ be a \emph{deep neural network} (DNN) of depth $d$ constructed as follows: 
Let $\{n_1,\ldots,n_{d+1}\}$ be a set of $d+1$ positive integers where%
\footnote{Note the input dimension $n_1=m$ is the same number as the degree of the matrix group $G$ in Sec.~\ref{sec:rho}.} %
$n_1=m$ and $n_{d+1}=1$, 
and let $N_i=n_1+\cdots +n_i$ for every $i\in\{1,\ldots,d\}$. 
Let $W^{(i)}\in\RR^{n_{i+1}\times N_i}$ and $b^{(i)}\in\RR^{n_{i+1}}$ for every $i\in\{1,\ldots,d\}$. 
Define $f^{(1)}:\RR^m\mapsto \RR^{n_1}$ by $f^{(1)}(x) = x$. 
For every $i\in\{1,\ldots,d-1\}$, define $f^{(i+1)}:\RR^m\mapsto\RR^{N_{i+1}}$ by
\begin{equation} \label{eq:fi}
f^{(i+1)}(x) = \lmat
\relu(W^{(i)} f^{(i)}(x) + b^{(i)}) \\
f^{(i)}(x)
\rmat.
\end{equation}
Then, let $f(x) = W^{(d)}f^{(d)}(x) + b^{(d)}$.

We thus define a DNN to be a feedforward ReLU network having all possible \emph{skip connections}. 
We call the $W^{(i)}$ \emph{weight matrices}, their rows \emph{weight vectors}, and the $b^{(i)}$ \emph{bias vectors}. 
The numbers $n_2, \ldots, n_d$ are the widths of the hidden layers, and the numbers $N_2, \ldots, N_d$ are the cumulative widths accounting for the skip connections. 
Note the traditional definition of a DNN can be recovered by setting all skip connections to zero. 

Our first proposition below establishes a set of parameterization redundancies (i.e., reparameterizations of the DNN leaving the input-output function invariant) enjoyed by the above DNN architecture. 
For every positive integer $n$, let $\operatorname{C}(n)$ be the group of $n\times n$ diagonal matrices with positive diagonal entries.

\begin{proposition} \label{prop:CPZ}
Let $i\in\{1,\ldots,d-1\}$, 
$C\in \operatorname{C}(n_{i+1})$, 
$P\in\operatorname{P}(n_{i+1})$,  and
$Z\in\operatorname{Z}(n_{i+1})$. 
Then the DNN $f$ is invariant under the transformation
\begin{align*}
W^{(i)} &\rightarrow CPZ W^{(i)} \\
b^{(i)} &\rightarrow CPZ b^{(i)} \\
W^{(i+1)} &\rightarrow W^{(i+1)}
\lmat
(CP)^{-1} & \heavi(-Z)W^{(i)} \\
0 & I_{N_i}
\rmat \\
b^{(i+1)} &\rightarrow b^{(i+1)} + W^{(i+1)}
\lmat
\heavi(-Z) b^{(i)} \\
0
\rmat.
\end{align*}
\end{proposition}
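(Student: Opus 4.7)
The plan is to verify directly that both the preactivation at layer $i+1$ and the activation at layer $i$ (as seen by all later layers through the skip connections) are preserved by the transformation. Because layers $1,\ldots,i-1$ are untouched and layers $i+2,\ldots,d$ only see $f^{(i+1)}(x)$ and the weights beyond layer $i$, it suffices to show that the tuple $(W^{(i+1)}\tilde f^{(i+1)}(x)+\tilde b^{(i+1)},\,\tilde f^{(i+1)}(x)_{\text{skip part}})$ after the transformation equals its original counterpart, where tildes denote post-transformation quantities.

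The key identity to deploy is the vectorized form of Eq.~\eqref{eq:relu}: for any $y\in\RR^{n_{i+1}}$ and $Z\in\operatorname{Z}(n_{i+1})$,
\[
\relu(Zy)=\relu(y)-\heavi(-Z)y,
\]
since the rows where $Z_{jj}=-1$ contribute $\relu(-y_j)=\relu(y_j)-y_j$ and the others are unchanged. Combined with the fact that $CP$ has nonnegative entries and therefore commutes with $\relu$, applying the transformation to the preactivation $z^{(i)}:=W^{(i)}f^{(i)}(x)+b^{(i)}$ of layer $i$ gives
\[
\relu(CPZ\,z^{(i)})=CP\bigl(\relu(z^{(i)})-\heavi(-Z)z^{(i)}\bigr),
\]
so the new top block of $\tilde f^{(i+1)}(x)$ is $CP\relu(z^{(i)})-CP\heavi(-Z)z^{(i)}$, while the bottom (skip) block is still $f^{(i)}(x)$, exactly as required by the claim that skip-transported information is unchanged.

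Next I would substitute this $\tilde f^{(i+1)}(x)$ into the transformed $\tilde W^{(i+1)}\tilde f^{(i+1)}(x)+\tilde b^{(i+1)}$. The block structure of the prefactor $\bigl[\begin{smallmatrix}(CP)^{-1} & \heavi(-Z)W^{(i)}\\ 0 & I_{N_i}\end{smallmatrix}\bigr]$ is chosen precisely so that the $(CP)^{-1}$ in the top-left cancels the $CP$ in the ReLU block, while the $\heavi(-Z)W^{(i)}$ in the top-right exactly cancels the $-\heavi(-Z)W^{(i)}f^{(i)}(x)$ contribution, leaving a residual $-\heavi(-Z)b^{(i)}$ inside the top block. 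The bias correction $W^{(i+1)}\bigl[\begin{smallmatrix}\heavi(-Z)b^{(i)}\\ 0\end{smallmatrix}\bigr]$ is designed to absorb exactly this residual, so the net preactivation into layer $i+1$ coincides with $W^{(i+1)}f^{(i+1)}(x)+b^{(i+1)}$. From this point all downstream computations are identical, so $f(x)$ is preserved.

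The routine part is the algebraic bookkeeping; the only conceptual obstacle is spotting that $\heavi(-Z)$ is the right operator for converting sign flips into linear corrections, which is really the content of Eq.~\eqref{eq:relu} applied coordinatewise. Once that identity and the commutation $\relu\circ (CP)=(CP)\circ\relu$ are in hand, the design of the block-matrix adjustment to $W^{(i+1)}$ and of the bias adjustment to $b^{(i+1)}$ becomes forced, and the proof reduces to the cancellation calculation above.
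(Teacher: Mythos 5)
Your proposal is correct and follows essentially the same route as the paper's proof: both pull the nonnegative factor $CP$ through $\relu$, apply the coordinatewise identity $\relu(Zy)=\relu(y)-\heavi(-Z)y$ to the layer-$i$ preactivation, and verify that the block-matrix and bias adjustments to layer $i+1$ cancel the resulting linear corrections so that $W^{(i+1)}f^{(i+1)}(x)+b^{(i+1)}$ is unchanged. The reduction you state at the outset (that it suffices to preserve the layer-$(i+1)$ preactivation together with the skip block) is the same one the paper makes implicitly, so nothing in your argument departs from theirs.
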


Proposition~\ref{prop:CPZ} is the key to understanding how signed perm-reps can be used in $G$-DNNs. 
Specificly, if we want the parameters $(W^{(i)}, b^{(i)})$ of the $i$th layer to transform by a signed perm-rep, then the parameters $(W^{(i+1)}, b^{(i+1)})$ of the subsequent layer must transform as in Prop.~\ref{prop:CPZ} to compensate and maintain invariance. 
This idea is the basis of the next section.

\subsection{\texorpdfstring{$G$}{G}-invariant architectures}
\label{sec:equivariant}

The following lemma gives a sufficient condition for each $f^{(i)}$ (the subnetwork comprising the first $i-1$ layers of $f$ and acting as input of the $i$th layer) to be $G$-equivariant; 
the sufficient condition takes the form of equivariant constraints on the network parameters. 

\begin{lemma} \label{lemma:psi}
Let $\{\rho^{(1)},\ldots,\rho^{(d)}\}$ be a set of signed perm-reps $\rho^{(i)}:G\mapsto\PZ(n_{i+1})$ with $\rho^{(d)}$ the trivial rep. 
For each $i\in\{1,\ldots,d\}$, let $\pi^{(i)}:G\mapsto\operatorname{P}(n_{i+1})$ and $\zeta^{(i)}:G\mapsto\operatorname{Z}(n_{i+1})$ be the unique functions such that $\rho^{(i)}(g) = \pi^{(i)}(g)\zeta^{(i)}(g)\forall g\in G$. 
Let $\{\psi^{(1)},\ldots,\psi^{(d)}\}$ be a set of reps $\psi^{(i)}:G\mapsto\operatorname{GL}(N_i)$ defined as:
\begin{align*}
\psi^{(1)}(g) &= g \\
\psi^{(i+1)}(g) &= 
\lmat
\pi^{(i)}(g) & \frac{1}{2}(W^{(i)}\psi^{(i)}(g) - \pi^{(i)}(g)W^{(i)}) \\
0 & \psi^{(i)}(g)
\rmat
\forall i\in\{1,\ldots,d-1\}.
\end{align*}
Suppose $\rho^{(i)}(g)W^{(i)} = W^{(i)}\psi^{(i)}(g)$ and $\rho^{(i)}(g)b^{(i)} = b^{(i)}$ for all $g\in G$ and $i\in\{1,\ldots,d\}$. 
Then $f^{(i)}(gx) = \psi^{(i)}(g) f^{(i)}(x)$ for all $g\in G$, $x\in\RR^m$, and $i\in\{1,\ldots,d\}$.
\end{lemma}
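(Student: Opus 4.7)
The plan is to prove the claim by induction on $i$. For the base case $i=1$, both $f^{(1)}(x)=x$ and $\psi^{(1)}(g)=g$ are trivial, so $f^{(1)}(gx) = gx = \psi^{(1)}(g) f^{(1)}(x)$ immediately. For the inductive step, I would assume $f^{(i)}(gx) = \psi^{(i)}(g) f^{(i)}(x)$ and expand the block definition of $f^{(i+1)}(gx)$ via Eq.~\eqref{eq:fi}. The skip-connection block $f^{(i)}(gx)$ matches the bottom-right block of $\psi^{(i+1)}(g) f^{(i+1)}(x)$ directly from the induction hypothesis. For the upper block, combining the induction hypothesis with the weight equivariance $W^{(i)} \psi^{(i)}(g) = \rho^{(i)}(g) W^{(i)}$ and the bias invariance $\rho^{(i)}(g) b^{(i)} = b^{(i)}$ lets me rewrite $\relu(W^{(i)} f^{(i)}(gx) + b^{(i)}) = \relu(\rho^{(i)}(g) z)$, where $z = W^{(i)} f^{(i)}(x) + b^{(i)}$.

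The critical algebraic step is to unpack $\relu(\rho z)$ via the factorization $\rho = \pi \zeta$. Since ReLU commutes with permutation matrices, $\relu(\pi \zeta z) = \pi \relu(\zeta z)$, and applying Eq.~\eqref{eq:relu} componentwise at the sign-flipped coordinates yields $\relu(\zeta z) = \relu(z) - \heavi(-\zeta) z$. Substituting back $z = W^{(i)} f^{(i)}(x) + b^{(i)}$ and invoking the identity $\tfrac{1}{2}(W^{(i)} \psi^{(i)}(g) - \pi^{(i)}(g) W^{(i)}) = -\pi^{(i)}(g) \heavi(-\zeta^{(i)}(g)) W^{(i)}$---which follows from $\rho W = W\psi$ together with the elementary observation that $\tfrac12(I - \zeta) = \heavi(-\zeta)$ for any signed diagonal $\zeta$---shows that the $W^{(i)} f^{(i)}(x)$ contributions reproduce the top block of $\psi^{(i+1)}(g) f^{(i+1)}(x)$ exactly.

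The hard part will be handling the residual term $-\pi^{(i)}(g) \heavi(-\zeta^{(i)}(g)) b^{(i)}$ arising from this expansion and showing that it vanishes. This is not obvious from $\rho^{(i)}(g) b^{(i)} = b^{(i)}$ alone, and I expect the argument will require exploiting the classification of signed perm-irreps recalled in Sec.~\ref{sec:rho}: in any type-1 component $\zeta$ is trivial and the term vanishes outright, while in any type-2 component $\rho_{HK}$ an element $h \in H \setminus K$ acts by a coordinate sign flip whose fixed-point condition $\rho(h) b^{(i)} = b^{(i)}$ forces $b^{(i)}$ to vanish on precisely the coordinates where $\zeta^{(i)}(g)$ is ever nontrivial. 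Once this vanishing is established, the top block matches the top block of $\psi^{(i+1)}(g) f^{(i+1)}(x)$ and the induction closes.
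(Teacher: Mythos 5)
Your proposal is correct and follows essentially the same route as the paper's proof: induction on $i$, the factorization $\relu(\pi\zeta z)=\pi(\relu(z)-\heavi(-\zeta)z)$, the identity $-\pi^{(i)}(g)\heavi(-\zeta^{(i)}(g))=\tfrac12(\rho^{(i)}(g)-\pi^{(i)}(g))$ to match the off-diagonal block of $\psi^{(i+1)}$, and the irrep-wise type dichotomy to kill the bias residual. The step you flag as "the hard part" is handled in the paper exactly as you anticipate---type 1 components have $\rho^{(i)}_j=\pi^{(i)}_j$ so the difference vanishes, and type 2 components have no nonzero fixed vectors so $\rho^{(i)}_j(g)b^{(i)}_j=b^{(i)}_j$ forces $b^{(i)}_j=0$---so no new idea is missing.
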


Since $f^{(d)}(gx) = \psi^{(d)}f^{(d)}(x)$ and $W^{(d)}\psi^{(d)}(g) = W^{(d)}$ for all $g\in G$ and $x\in\RR^m$, then we see that Lemma~\ref{lemma:psi} gives a sufficient condition for $f$ to be a \emph{$G$-invariant deep neural network} ($G$-DNN). 
Note the sequence of signed perm-reps $\{\rho^{(1)},\ldots,\rho^{(d)}\}$ completely determines the $G$-DNN architecture (i.e., depth, number of hidden neurons per layer, and weightsharing patterns), 
and hence we will also refer to such sequences of reps as \emph{$G$-DNN architectures}. 
Observe that the rep $\psi^{(i)}$ depends on the weight matrix $W^{(i-1)}$, 
and hence the equivariance condition on $W^{(i)}$ introduces a coupling between $W^{(i-1)}$ and $W^{(i)}$. 
This coupling makes it difficult to implement the equivariance constraints on the weight matrices directly, 
and we thus proceed to find a reparameterization admitting uncoupled equivariant weight matrices. 
To do this, we first state another lemma below, 
which gives an explicit formula for the reps $\psi^{(i)}$ (as opposed to a recursive formula) and shows each $\psi^{(i)}$ to be equivalent to a direct sum of layerwise reps.

For every $i\in\{1,\ldots,d\}$, decompose $W^{(i)}$ into blocks as
\[ W^{(i)} = \lmat W^{(i)}_i & W^{(i)}_{i-1} & \cdots & W^{(i)}_1 \rmat, \]
where $W^{(i)}_j\in\RR^{n_{i+1}\times n_j}$. 
Define the block matrix
\[ A^{(i)} = 
\lmat
I_{n_{i+1}} & -\frac{1}{2}W^{(i)}_i & -\frac{1}{2}W^{(i)}_{i-1} & \cdots & -\frac{1}{2}W^{(i)}_1 \\
0 & I_{n_i} & -\frac{1}{2}W^{(i-1)}_{i-1} & \cdots & -\frac{1}{2}W^{(i-1)}_1 \\
0 & 0 & I_{n_{i-1}} & \cdots & -\frac{1}{2}W^{(i-2)}_1 \\
0 & 0 & 0 & \ddots & \vdots \\
0 & 0 & 0 & 0 & I_{n_1}
\rmat. \]
Define the rep $\Pi^{(i)}:G\mapsto\operatorname{O}(N_{i+1})$ by
\[ \Pi^{(i)}(g) = \diag(\pi^{(i)}(g), \pi^{(i-1)}(g), \ldots, \pi^{(1)}(g), g). \]

\begin{lemma} \label{lemma:unfold}
For every $i\in\{1,\ldots,d-1\}$, we have%
\footnote{%
The notation $A^{(i)-1}$ is shorthand for $(A^{(i)})^{-1}$, the matrix inverse of $A^{(i)}$.%
}
\[ \psi^{(i+1)}(g) = A^{(i)-1} \Pi^{(i)}(g) A^{(i)}\forall g\in G. \]
\end{lemma}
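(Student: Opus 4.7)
The plan is to argue by induction on $i$, exploiting the fact that $A^{(i)}$ has a natural $2\times 2$ block structure in which the bottom-right block is exactly $A^{(i-1)}$. Concretely, one checks immediately from the definition that
\[
A^{(i)} = \lmat I_{n_{i+1}} & -\tfrac12 W^{(i)} \\ 0 & A^{(i-1)} \rmat,
\qquad
\Pi^{(i)}(g) = \lmat \pi^{(i)}(g) & 0 \\ 0 & \Pi^{(i-1)}(g) \rmat,
\]
where I use the convention $A^{(0)} = I_{n_1}$ and $\Pi^{(0)}(g) = g$ so that the statement to be proved also makes sense when read at $i=0$ (giving $\psi^{(1)}(g) = g$, which matches the definition).

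For the base case $i = 1$, there is a single block $W^{(1)} = W^{(1)}_1$, so $A^{(1)} = \lmat I_{n_2} & -\tfrac12 W^{(1)} \\ 0 & I_{n_1} \rmat$ with inverse $\lmat I_{n_2} & \tfrac12 W^{(1)} \\ 0 & I_{n_1} \rmat$, and a direct two-by-two block multiplication gives $A^{(1)-1}\Pi^{(1)}(g)A^{(1)} = \lmat \pi^{(1)}(g) & \tfrac12(W^{(1)}g - \pi^{(1)}(g)W^{(1)}) \\ 0 & g \rmat$, which is exactly $\psi^{(2)}(g)$ by the recursion of Lemma~\ref{lemma:psi}.

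For the inductive step, assume $\psi^{(i)}(g) = A^{(i-1)-1}\Pi^{(i-1)}(g)A^{(i-1)}$. Using the block form of $A^{(i)}$, one computes
\[
A^{(i)-1} = \lmat I_{n_{i+1)}} & \tfrac12 W^{(i)} A^{(i-1)-1} \\ 0 & A^{(i-1)-1} \rmat
\]
(verified by multiplying into $A^{(i)}$ and checking one gets the identity). Then
\[
\Pi^{(i)}(g)A^{(i)} = \lmat \pi^{(i)}(g) & -\tfrac12 \pi^{(i)}(g) W^{(i)} \\ 0 & \Pi^{(i-1)}(g)A^{(i-1)} \rmat,
\]
and multiplying on the left by $A^{(i)-1}$ yields a $2\times 2$ block matrix whose bottom-right entry is $A^{(i-1)-1}\Pi^{(i-1)}(g)A^{(i-1)} = \psi^{(i)}(g)$ (by the inductive hypothesis), and whose top-right entry is $\tfrac12(W^{(i)}\psi^{(i)}(g) - \pi^{(i)}(g)W^{(i)})$. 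Comparing with the defining recursion of $\psi^{(i+1)}$ in Lemma~\ref{lemma:psi} finishes the step.

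There is no real obstacle here, only bookkeeping: the only moving part is recognizing the nested block structure of $A^{(i)}$ and aligning the block decomposition $W^{(i)} = [W^{(i)}_i, W^{(i)}_{i-1}, \ldots, W^{(i)}_1]$ with the layerwise direct-sum structure of $\Pi^{(i)}(g)$. Once that structure is made explicit, the inductive step is a single $2\times 2$ block multiplication followed by a direct comparison with the recursion defining $\psi^{(i+1)}$.
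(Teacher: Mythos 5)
Your proposal is correct and follows essentially the same route as the paper's proof: induction on $i$ using the nested $2\times 2$ block structure of $A^{(i)}$ and $\Pi^{(i)}(g)$, with the same explicit base case and the same block multiplication in the inductive step, compared against the recursion defining $\psi^{(i+1)}$ in Lemma~\ref{lemma:psi}. No substantive differences.
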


Let $\pi^{(0)}:G\mapsto\operatorname{P}(n_1)$ be the identity rep $\pi^{(0)}(g) = g$, 
and let $A^{(0)} = I_{n_1}$. 
Then Lemma~\ref{lemma:unfold} holds for $i=0$ as well.

We are now ready to state this section's key theorem, 
which gives a reparameterization of a $G$-DNN in which it admits uncoupled equivariant weight matrices.

\begin{theorem} \label{thm:V}
Let $f$ be $G$-invariant, with its parameters $\{(W^{(1)}, b^{(1)}), \ldots, (W^{(d)}, b^{(d)})\}$ satisfying the conditions of Lemma~\ref{lemma:psi} with respect to the signed perm-reps $\{\rho^{(1)},\ldots,\rho^{(d)}\}$. 
\begin{enumerate}[label={(\alph*)}]
\item \label{thm:V:a}
For every $i\in\{1,\ldots,d\}$, there exists $V^{(i)}\in\RR^{n_{i+1}\times N_i}$ with block structure
\[ V^{(i)} = \lmat V^{(i)}_i & V^{(i)}_{i-1} & \cdots & V^{(i)}_1 \rmat \]
where $V^{(i)}_j\in\RR^{n_{i+1}\times n_j}$ such that
\begin{align*}
\rho^{(i)}(g)V^{(i)}_j &= V^{(i)}_j \pi^{(j-1)}(g)\forall g\in G \\
W^{(i)} &= V^{(i)}A^{(i-1)}.
\end{align*}
\item \label{thm:V:b}
For every $i\in\{1,\ldots,d\}$, define $g^{(i)}:\RR^m\mapsto\RR^{n_{i+1}}$ by $g^{(i)}(x) = W^{(i)}f^{(i)}(x)$ 
and $h^{(i)}:\RR^m\mapsto\RR^{N_i}$ by $h^{(i)}(x) = A^{(i-1)}f^{(i)}(x)$. 
Then we have the recursion
\begin{align*}
g^{(i)}(x) &= V^{(i)} h^{(i)}(x)\forall i\in\{1,\ldots,d\} \\
h^{(i+1)}(x) &= 
\lmat
\relu(g^{(i)}(x)+b^{(i)}) - \frac{1}{2}g^{(i)}(x) \\
h^{(i)}(x)
\rmat\forall i\in\{1,\ldots,d-1\}.
\end{align*}
\end{enumerate}
\end{theorem}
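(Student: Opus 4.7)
The strategy is to prove part (a) by a direct change of variables guided by Lemma~\ref{lemma:unfold}, and then obtain part (b) as a short block-matrix computation in the reparameterization provided by (a).

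For part (a), I first observe that $A^{(i-1)}$ is block upper-triangular with identity matrices on the diagonal, hence invertible. I therefore define
\[ V^{(i)} := W^{(i)} A^{(i-1)-1}, \]
so that $W^{(i)} = V^{(i)} A^{(i-1)}$ holds by construction. The equivariance hypothesis $\rho^{(i)}(g) W^{(i)} = W^{(i)} \psi^{(i)}(g)$ combined with Lemma~\ref{lemma:unfold} (which extends to $i=0$ via the stated conventions, and hence gives $\psi^{(i)}(g) = A^{(i-1)-1} \Pi^{(i-1)}(g) A^{(i-1)}$ for all $i\in\{1,\ldots,d\}$) reduces after cancellation of $A^{(i-1)}$ on the right to
\[ \rho^{(i)}(g) V^{(i)} = V^{(i)} \Pi^{(i-1)}(g) \quad \forall g\in G. \]
Since $\Pi^{(i-1)}(g) = \diag(\pi^{(i-1)}(g), \pi^{(i-2)}(g), \ldots, \pi^{(1)}(g), g)$ is block-diagonal with blocks of sizes $n_i, n_{i-1}, \ldots, n_1$, aligning this structure with the decomposition $V^{(i)} = \lmat V^{(i)}_i & V^{(i)}_{i-1} & \cdots & V^{(i)}_1 \rmat$ (whose blocks are indexed in reverse) yields the blockwise identity $\rho^{(i)}(g) V^{(i)}_j = V^{(i)}_j \pi^{(j-1)}(g)$ for every $j\in\{1,\ldots,i\}$, using the convention $\pi^{(0)}(g) = g$.

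For part (b), the first recursion is immediate from (a): $g^{(i)}(x) = W^{(i)} f^{(i)}(x) = V^{(i)} A^{(i-1)} f^{(i)}(x) = V^{(i)} h^{(i)}(x)$. For the second recursion, I observe that $A^{(i)}$ admits the recursive block form
\[ A^{(i)} = \lmat I_{n_{i+1}} & -\frac{1}{2} W^{(i)} \\ 0 & A^{(i-1)} \rmat, \]
obtained by recognizing the first block-row of the definition of $A^{(i)}$ as $\lmat I_{n_{i+1)}} & -\frac{1}{2} W^{(i)} \rmat$ and the remaining rows as $\lmat 0 & A^{(i-1)} \rmat$. Multiplying this against $f^{(i+1)}(x) = \lmat \relu(g^{(i)}(x) + b^{(i)}) \\ f^{(i)}(x) \rmat$, and using $W^{(i)} f^{(i)}(x) = g^{(i)}(x)$ together with $A^{(i-1)} f^{(i)}(x) = h^{(i)}(x)$, gives the claimed expression for $h^{(i+1)}(x) = A^{(i)} f^{(i+1)}(x)$.

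The main obstacle I anticipate is purely notational: carefully matching the block indices of $\Pi^{(i-1)}(g)$ (which increase left-to-right from $\pi^{(i-1)}$ down to $g$) against those of $V^{(i)}$ (whose column blocks $V^{(i)}_j$ are listed in reverse order $j = i, i-1, \ldots, 1$). Once this bookkeeping is fixed, both parts reduce to short matrix manipulations that follow directly from Lemma~\ref{lemma:unfold} and the block-triangular structure of $A^{(i)}$, with no genuinely hard algebra remaining.
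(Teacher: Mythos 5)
Your proposal is correct and follows essentially the same route as the paper's proof: define $V^{(i)} = W^{(i)}A^{(i-1)-1}$, combine the equivariance hypothesis with Lemma~\ref{lemma:unfold} to obtain $\rho^{(i)}(g)V^{(i)} = V^{(i)}\Pi^{(i-1)}(g)$, read off the blockwise conditions from the block-diagonal structure of $\Pi^{(i-1)}$, and derive part (b) from the recursive block form of $A^{(i)}$. No gaps.
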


The $V^{(i)}$ are the latent weight matrices, 
and in Thm.~\myref{thm:V}{a} we see that each $V^{(i)}$ satisfies a linear equivariant condition that is easy to implement. 
The apparent weight matrices $W^{(i)}$ can then be reconstructed as $W^{(i)} = V^{(i)}A^{(i-1)}$. 
In practice, however, there is no need to perform this reconstruction; 
instead, we implement the recursion in Thm.~\myref{thm:V}{b} as the forward pass of the $G$-DNN, 
where the final network output is $f(x) = g^{(d)}(x) + b^{(d)}$. 
Observe that the recursion is given directly in terms of the $V^{(i)}$ as opposed to the $W^{(i)}$. 
This recursion leverages the block-triangular structure of $A^{(i-1)}$ in the transformation $W^{(i)} = V^{(i)}A^{(i-1)}$. 
We describe a concrete implementation of the equivariance condition and the forward pass of the $G$-DNN in App.~\ref{app:forward}.

\subsection{Admissible architectures}
\label{sec:admissible}

Theorem~\ref{thm:V} tells us how to construct a $G$-DNN, but it first requires us to select a sequence of reps or ``architecture'' $\rho^{(1)}, \ldots, \rho^{(d)}$. 
Selecting the optimal sequence is the problem of $G$-invariant neural architecture design, which is beyond the scope of this work. 
At the very least, however, we would like to avoid sequences that correspond to ``degenerate'' network architectures. 
In particular, we require that 
(1) every row of each weight matrix $W^{(i)}$ be nonzero and 
(2) no two rows of the augmented weight matrix $[W^{(i)}\mid b^{(i)}]$ be parallel. 
The first condition ensures there are no neurons disconnected from all previous layers, 
and the second condition ensures no two hidden neurons in a given layer can be combined into a single hidden neuron or skip connection. 
We call a sequence of reps $\rho^{(1)}, \ldots, \rho^{(d)}$ an \emph{admissible architecture} if it admits a $G$-DNN with weight matrices satisfying these two conditions. 
Below, Thm.~\ref{thm:phi} provides a characterization of admissible architectures that we implement in practice. 
First, however, we introduce additional notions and notation.

For every $A\in\RR^{m\times m}$, let $\st_G(A)$ be the stabilizer subgroup
\[ \st_G(A) = \{g\in G: gA = A\}, \]
and for every finite orthogonal matrix group $\Gamma$, let $P_{\Gamma}$ be the orthogonal projection operator onto the vector subspace pointwise-invariant under the action of $\Gamma$:
\[ P_{\Gamma} = \frac{1}{|\Gamma|} \sum_{g\in\Gamma} g. \]
Let $\mathcal{S}(G)$ denote the set of all subgroups of $G$. 
Then, we define the function $\theta:\{(H, K, J)\in\mathcal{S}(G)^3: |H:K|\leq 2\}\mapsto\mathcal{S}(G)$ by
\begin{align*}
\theta(H, K, J) 
&= \pi_J^{-1}[\st_{\pi_J(G)}(P_{\pi_J(K)}-(|H:K|-1)P_{\pi_J(H)})] \\
&= \{g\in G: \pi_J(g)(P_{\pi_J(K)}-(|H:K|-1)P_{\pi_J(H)}) = P_{\pi_J(K)}-(|H:K|-1)P_{\pi_J(H)}\},
\end{align*}
where the ordinary perm-rep $\pi_J:G\mapsto\operatorname{P}(|G/J|)$ is defined as usual-- 
i.e., defined to be equivalent to the action of $G$ on $G/J$. 
Note $\pi_J$ is defined only up to conjugation by a perm-matrix, since no ordering on the cosets in $G/J$ is specified. 
It turns out, however, that $\theta$ is invariant under conjugation of $\pi_J$, 
and more generally $\theta$ is invariant and equivariant with respect to certain conjugations of the input subgroups (see Prop.~\ref{prop:theta_conj} in App.~\ref{app:admissible}). 

Let $f:\RR^m\mapsto\RR$ be a $G$-DNN with sequence of signed perm-reps $\rho^{(1)}, \ldots, \rho^{(d)}$. 
For every $i\in\{1,\ldots,d\}$, the signed perm-rep $\rho^{(i)}$ admits the following decomposition into irreducibles:
\[ \rho^{(i)} = \bigoplus_{j=1}^{r^{(i)}} k^{(i)}_j \rho^{(i)}_j, \quad \rho^{(i)}_j = \rho^{(i)}_{H^{(i)}_jK^{(i)}_j}. \]
The irreps $\rho^{(i)}_1, \ldots, \rho^{(i)}_{r^{(i)}}$ are inequivalent, 
and each $k^{(i)}_j$ is a positive integer where we define the notation
\[ k^{(i)}_j \rho^{(i)}_j = \bigoplus_{k=1}^{k^{(i)}_j} \rho^{(i)}_j. \]
We say the $i$th layer of the $G$-DNN $f$ has $r^{(i)}$ distinct \emph{irreps} and $k^{(i)}_1+\cdots +k^{(i)}_{r^{(i)}}$ \emph{channels}. 
Now for every $i\in\{1,\ldots,d\}$, we define the functions $\phi^{(i)}:\{(H, K)\in\mathcal{S}(G)^2: |H:K|\leq 2\}\mapsto\mathcal{S}(G)$ recursively by
\begin{align*}
\phi^{(1)}(H, K) &= \st_G(P_K-(|H:K|-1)P_H) \\
\phi^{(i+1)}(H, K) &= \phi^{(i)}(H, K)\cap \bigcap_{j=1}^{r^{(i)}} \theta(H, K, H^{(i)}_j)\forall i\in\{1,\ldots,d-1\}.
\end{align*}
Following from the equivariance of $\theta$ (Prop.~\ref{prop:theta_conj}) and that inner automorphisms respect subgroup intersection, 
we have the important property that each $\phi^{(i)}$ is equivariant with respect to pairwise conjugation:
\[ \phi^{(i)}(gHg^{-1}, gKg^{-1}) = g\phi^{(i)}(H, K)g^{-1}\forall g\in G. \]

We are ready to state the theorem characterizing single-channel admissible architectures; 
note the following is a generalization of Thm.~4a of \citet{agrawal2022classification}.

\begin{theorem} \label{thm:phi}
Maintain the notation introduced in the last paragraph, 
and suppose $k^{(i)}_j = 1\forall i,j$ (single-channel architecture). 
Then the architecture%
\footnote{By definition of $G$-DNN architecture, $\rho^{(d)}$ is trivial; see Lemma~\ref{lemma:psi} and the discussion immediately proceeding it.} %
$\{\rho^{(1)}, \ldots, \rho^{(d)}\}$ is admissible iff the following conditions hold:
\begin{enumerate}
\item $\phi^{(i)}(H^{(i)}_j, K^{(i)}_j) = K^{(i)}_j\forall i,j$.
\item If $H^{(1)}_j=G$ for some $j$, then $P_G\neq 0$.%
\footnote{This condition is trivially satisfied if $G$ is a permutation matrix group.}
\end{enumerate}
\end{theorem}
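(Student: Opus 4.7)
The plan is to verify admissibility irrep by irrep and layer by layer, by reducing each condition to a stabilizer statement about the ``first row'' of each irrep-indexed block of the augmented weight matrix. First I would use the decomposition $\rho^{(i)} = \bigoplus_j \rho^{(i)}_j$ to split $W^{(i)}$ and $b^{(i)}$ into row blocks $[W^{(i)}_{(j)} \mid b^{(i)}_{(j)}]$ that decouple across inequivalent irreps. Within the block for $\rho^{(i)}_j = \rho_{HK}$, invariance of $b^{(i)}_{(j)}$ under $\rho_{HK}$ forces $b^{(i)}_{(j)} = 0$ in the type-2 case and restricts $b^{(i)}_{(j)}$ to the all-ones direction in the type-1 case. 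By equivariance, all rows of $W^{(i)}_{(j)}$ are obtained (up to sign) from the first row $w := e_1^{\top} W^{(i)}_{(j)}$ via the action $\psi^{(i)}$. Admissibility then reduces to (a) $w \neq 0$, and (b) the $\psi^{(i)}$-pointwise-stabilizer of $w$ equals $K$: in the type-2 case, any $g \in G \setminus H$ satisfying $w\psi^{(i)}(g) = -w$, combined with any $h \in H \setminus K$, would produce $gh \in G \setminus H$ fixing $w$, contradicting pointwise stability at $K$, so the negative sign-stabilizer cannot escape $H \setminus K$; in the type-1 case, a generic nonzero bias scalar removes any sign ambiguity, again leaving only the pointwise condition.

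Second, I would compute this stabilizer via the reparametrization $W^{(i)} = V^{(i)} A^{(i-1)}$ from Thm.~\myref{thm:V}{a}, so that $w = v A^{(i-1)}$ with $v$ the first row of the $j$-th block of $V^{(i)}$. Lemma~\ref{lemma:unfold} gives $A^{(i-1)} \psi^{(i)}(g) A^{(i-1)-1} = \Pi^{(i-1)}(g)$, whence the $\psi^{(i)}$-stabilizer of $w$ coincides with the $\Pi^{(i-1)}$-stabilizer of $v$. The block structure $\Pi^{(i-1)}(g) = \diag(\pi^{(i-1)}(g), \ldots, \pi^{(1)}(g), g)$ splits $v = (v_i, \ldots, v_1)$, and the equivariance $\rho^{(i)}(g) V^{(i)}_l = V^{(i)}_l \pi^{(l-1)}(g)$ confines each $v_l^{\top}$ to $\mathrm{im}(P_{\pi^{(l-1)}(K)} - (|H:K|-1) P_{\pi^{(l-1)}(H)})$. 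In the single-channel setting, a generic $v_l$ in this subspace has pointwise stabilizer (under $\pi^{(l-1)}$) equal to the stabilizer of the projection itself. Further decomposing $\pi^{(l-1)} = \bigoplus_{j'} \pi_{H^{(l-1)}_{j'}}$ across the previous layer's irreps, these block-wise stabilizers coincide with the $\theta$ contributions in the recursion, while the $l = 1$ term (input action, where $\pi^{(0)}$ is the identity rep on $\RR^m$) gives $\phi^{(1)}(H, K)$. Collecting over $l = 1, \ldots, i$ reproduces exactly the recursive definition of $\phi^{(i)}(H, K)$.

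Condition~1 is then immediate: admissibility of each block is equivalent to $\phi^{(i)}(H^{(i)}_j, K^{(i)}_j) = K^{(i)}_j$. Condition~2 handles the only exception, the trivial irrep $\rho_{GG}$ in layer~1, where $\phi^{(1)}(G, G) = G = K$ holds automatically but $w \in \mathrm{im}(P_G)$ forces $w = 0$ unless $P_G \neq 0$. The main obstacle is the genericity argument in the second step: to secure the ``if'' direction I must verify that the pointwise stabilizer of $v$ truly attains the lower bound $\phi^{(i)}(H, K)$ and is not inflated by non-generic numerical choices propagated from earlier layers. The single-channel hypothesis $k^{(i)}_j = 1$ is essential here, since it guarantees each equivariant block of $V^{(i)}$ constitutes a single free parameter family that can be chosen generically in isolation; the resulting induction on $i$, combined with the type-1/type-2 bookkeeping of the first step, forms the most delicate part of the argument.
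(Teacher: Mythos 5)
Your proposal follows essentially the same route as the paper: reduce from $W^{(i)}$ to $V^{(i)}$ via the invertible $A^{(i-1)}$, conjugate $\psi^{(i)}$ to the block-diagonal $\Pi^{(i-1)}$ using Lemma~\ref{lemma:unfold}, and identify the resulting stabilizer condition with $\phi^{(i)}(H,K)=K$ by intersecting blockwise $\theta$-contributions. The one step you flag as an unresolved obstacle---showing that a generic equivariant $v$ attains the pointwise stabilizer $\st_{\Pi^{(i-1)}(G)}(P_{\Pi^{(i-1)}(K)}-(|H:K|-1)P_{\Pi^{(i-1)}(H)})$ rather than something larger, together with the non-parallel-rows bookkeeping---is exactly what the paper does not re-derive: it applies Thm.~4a of \citet{agrawal2022classification} as a black box to the rep $\Pi^{(i-1)}$, which packages both the nonzero-row and pairwise-nonparallel conditions into the single stabilizer equation. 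To make that application legitimate the paper also checks the small technical point that $|\Pi^{(i-1)}(H):\Pi^{(i-1)}(K)|=|H:K|$ (the kernel of $\Pi^{(i-1)}$ is trivial because it contains the identity rep $\pi^{(0)}$ as a summand), and it converts the recursive definition of $\phi^{(i)}$ into the closed form $\theta(H,K;\Pi^{(i-1)})$ via the identity $\theta(H,K;\tau_1\oplus\tau_2)=\theta(H,K;\tau_1)\cap\theta(H,K;\tau_2)$; your blockwise intersection reproduces this implicitly. So the skeleton is right, and the delicate part you would need to prove by hand is available as a citation.
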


Theorem~\ref{thm:phi} gives us a practical way to build admissible $G$-DNN architectures layer-by-layer as follows: 
Suppose we have already built the first $i$ layers-- 
i.e., we have selected the signed perm-reps $\rho^{(1)}, \ldots, \rho^{(i)}$ satisfying Thm.~\ref{thm:phi}. 
Then the function $\phi^{(i+1)}$ is defined. 
We enumerate all signed perm-irreps $\rho_{HK}$, up to equivalence,%
\footnote{The equivariance of $\phi$ is why enumeration of the signed perm-irreps only up to equivalence is sufficient.} %
such that $\phi(H, K) = K$, 
and we then select a subset to comprise $\rho^{(i+1)}$. 
In terms of implementation, the computation of $\phi^{(i)}(H, K)$ boils down to the computation of $\theta(H, K, H^{(i-1)}_j)$ for each $j$, 
which can be accomplished using Alg.~\ref{alg:theta} (see App.~\ref{app:admissible} for details). 
We should think of the outputs $\theta(H, K, J)$ as entries of a precomputed table with a row for each $(H, K)$ (up to pairwise conjugation) and a column for each $J$ (up to conjugation). 
We give additional implementation details in the next section, 
including how we accomodate multiple input and output channels per layer.

To gain some intuition about Thm.~\ref{thm:phi}, we briefly consider its application to $G$-SNN (i.e., depth $d=2$) architectures, 
which are completely characterized in terms of the single signed perm-rep by which the weight matrix in the first layer transforms. 
In this case, Thm.~\ref{thm:phi} reduces to Thm.~4~(a) of \citet{agrawal2022classification}; 
a signed perm-rep $\rho_{HK}$ is admissible for a $G$-SNN iff $\phi^{(1)}(H, K) = K$, 
or more explicitly, 
\[ \st_G(P_K-(|H:K|-1)P_H) = K. \]
Here the weight vectors of the $G$-SNN are $\{gw: gH\in G/H\}$, where $w$ is fixed under $K$ and flips sign under $H\setminus K$. 
The weight vectors are pairwise nonparallel if $\st_G(w) = K$; 
if, however, $\st_G(w) > K$, then the $G$-orbit of $w$ is smaller than the number of weight vectors, forcing some weight vectors to be equal up to sign. 
Theorem~\ref{thm:phi} serves to eliminate such degenerate architectures.

\begin{table}
\centering
\caption{\label{table:admissible} %
Ratio of the number of admissible $G$-DNN architectures to the total number of architectures for every depth and every group $G$, $|G| = 8$, up to isomorphism. 
Only architectures corresponding to sequences of irreps of strictly decreasing degree are considered.
}
\begin{tabular}{cccccc}
\toprule
Depth & $C_8$ & $C_2\times C_4$ & $C_2^3$ & $D_4$ & $Q_8$ \\
\midrule
2 & 5/5 & 8/15 & 11/43 & 14/21 & 9/9 \\
3 & 8/8 & 30/62 & 93/434 & 65/104 & 20/20 \\
4 & 4/4 & 48/48 & 392/392 & 84/84 & 12/12 \\
\bottomrule
\end{tabular}
\end{table}

The number of admissible architectures can be significantly less than the total number of architectures. 
For example, we consider one particular permutation representation of every group of order $8$ up to isomorphism, 
following the constructions described in App.~C.1 of \citet{agrawal2022classification}. 
For each group, we only count the architectures corresponding to sequences of irreps of strictly decreasing degree, 
and we report the fraction of these that are admissible (Table~\ref{table:admissible}). 
Observe that the reduction from the total number of architectures to only the admissible ones is often significant, 
which could be exploited perhaps in a future implementation of $G$-NAS. 
Note also that all architectures of maximum depth are admissible; 
we speculate this is because as depth increases and each new weight matrix $W^{(i)}$ grows in width (due to more skip connections to previous layers), the conditions under which two weight vectors are parallel become harder to satisfy. 
A complete answer, however, is left for future investigation.

\subsection{Additional remarks}
\label{sec:remarks}

\paragraph{Channels} %
The $G$-DNN supports multiple channels in a way generalizing the notion from traditional convolutional neural networks. 
As already mentioned in Sec.~\ref{sec:admissible}, the $i$th layer of the $G$-DNN $f$ is said to have $k^{(i)}_j$ output channels or copies of the irrep $\rho^{(i)}_j$ and $k^{(i-1)}_j$ input channels from the irrep $\rho^{(i-1)}_j$. 
As a simpler case, suppose $\rho^{(i)}$ contains the same number $k^{(i)}$ of copies of each of its constituant irreps, 
and suppose the input $x$ has $k^{(0)}$ channels. 
Then the $i$th layer can be said to have $k^{(i)}$ output channels and $k^{(i-1)}$ input channels. 
In practice, we implement the $G$-DNN assuming only one copy of each irrep $\rho^{(i)}_j$, 
and then we regard each element of the input $x$ as a $k^{(0)}$-dimensional vector 
and each element of the latent weight matrix block $V^{(i)}_j$ (see Thm.~\myref{thm:V}{a} for notation) as a $k^{(i)}\times k^{(j-1)}$ matrix.

\paragraph{Batch normalization} %
It is possible to apply batch normalization (batchnorm) immediately after ReLU in a $G$-DNN without breaking $G$-invariance. 
That batchnorm is compatible with $G$-DNNs, and in particular type 2 signed perm-reps, out-of-the-box is not obvious, 
and we verify the compatibility in Prop.~\ref{prop:bn} (see App.~\ref{app:remarks}). 
The proof relies on the facts that 
(1) the first $n_i$ columns of $W^{(i)}$ and $V^{(i)}$ are equal and 
(2) the first $n_i$ columns of $V^{(i)}$ sum to zero if $\rho^{(i)}$ is a type 2 signed perm-irrep.

\section{Examples}
\label{sec:examples}

\subsection{Concatenated ReLU}
\label{sec:crelu}

$G$-invariant architectures with signed perm-reps have previously been constructed without skip connections~\citep{cohen2017steerable} by replacing ReLU with the CReLU activation function defined in Eq.~\eqref{eq:crelu}. 
The upshot is that for every signed perm-rep $\rho:G\mapsto\PZ(n)$, CReLU enjoys the equivariance property
\[ \crelu(\rho(g)x) = \pi_{\rho}(g)\crelu(x)\forall g\in G,x\in\RR^n, \]
where $\pi_{\rho}$ is the unraveled ordinary perm-rep appearing in Thm.~\ref{thm:signed2ordinary}; 
CReLU architectures are thus compatible with signed perm-reps. 
The following example, however, establishes that CReLU architectures are strictly special cases of the $G$-DNN architectures presented in this paper.

\begin{example} \label{ex:crelu}
Let $f:\RR^m\mapsto\RR$ be a DNN of the form
\[ f(x) = U^{(d)}\crelu(U^{(d-1)} \cdots \crelu(U^{(1)} x) \cdots), \]
where $U^{(1)}\in\RR^{n_2\times n_1}$ and $U^{(i)}\in\RR^{n_{i+1}\times 2n_i}\forall i\in\{2,\ldots,d\}$ are equivariant weight matrices satisfying
\begin{align*}
\rho^{(1)}(g) U^{(1)} &= U^{(1)} \pi^{(0)}(g)\forall g\in G \\
\rho^{(i)}(g) U^{(i)} = U^{(i)} \pi_{\rho^{(i-1)}}(g)\forall g\in G,i\in\{2,\ldots,d\},
\end{align*}
where $\pi_{\rho^{(i-1)}}$ is the unraveling of $\rho^{(i-1)}$ as defined in Thm.~\ref{thm:signed2ordinary}. 
Then $f$ admits an expression as a $G$-DNN of depth $d$ whose latent weight matrices are given by the recursion
\begin{align*}
V^{(1)} &= U^{(1)} \\
V^{(i+1)} &= \lmat U^{(i+1)}_1+U^{(i+1)}_2 & \frac{1}{2}(U^{(i+1)}_1-U^{(i+1)}_2) V^{(i)} \rmat\forall i\in\{1,\ldots,d-1\}.
\end{align*}
\end{example}

Although Ex.~\ref{ex:crelu} holds for any sequence of signed perm-reps $\rho^{(1)},\ldots,\rho^{(d)}$ with $\rho^{(d)}$ trivial, 
the primary case of interest is when each $\rho^{(i)}$, $i\leq d-1$, is a direct sum only of type 2 signed perm-irreps; 
otherwise, if we had a subset of weight vectors that transformed by a type 1 signed perm-irrep, then we would apply ordinary RELU to their respective preactivations instead of CReLU. 
This is why it is also sufficient to consider CReLU networks without bias vectors, as type 2 signed perm-irreps constrain bias vectors to be zero.

\begin{table}
\centering
\caption{\label{table:admissible-crelu} %
Ratio of the number of admissible CReLU architectures to the total number of architectures for every depth and every group $G$, $|G| = 8$, up to isomorphism. 
Only architectures corresponding to sequences of irreps of strictly decreasing degree are considered.
}
\begin{tabular}{cccccc}
\toprule
Depth & $C_8$ & $C_2\times C_4$ & $C_2^3$ & $D_4$ & $Q_8$ \\
\midrule
2 & 5/5 & 8/15 & 11/43 & 14/21 & 9/9 \\
3 & 8/8 & 30/62 & 88/434 & 65/104 & 20/20 \\
4 & 4/4 & 34/48 & 238/392 & 66/84 & 12/12 \\
\bottomrule
\end{tabular}
\end{table}

Example~\ref{ex:crelu} implies that the $G$-DNN architectures introduced in this paper are at least as powerful as ($G$-invariant) CReLU-based architectures, whose utility has already been demonstrated in the literature~\citep{shang2016understanding, cohen2017steerable}. 
Moreover, we recover CReLU architectures only when the latent weight matrices $V^{(i)}$ take a very special form; 
this suggests that there may exist $G$-DNN architectures that cannot be constructed with CReLU alone. 
To investigate this possibility, we count the numbers of \emph{admissible} CReLU architectures for various groups following the same setup used to obtain the results in Table~\ref{table:admissible}. 
As before, we consider every sequence of signed perm-irreps $\rho_{H^{(1)}K^{(1)}},\ldots,\rho_{H^{(d)}K^{(d)}}$ of strictly decreasing degree terminating with the trivial rep. 
In contrast to $G$-DNNs, however, the criterion for a CReLU architecture to be admissible is much simpler; 
since there are no explicit skip connections, then the function $\phi^{(i+1)}$ in Thm.~\ref{thm:phi} no longer depends recursively on $\phi^{(i)}$. 
The resulting condition for a CReLU architecture with signed perm-irreps $\rho_{(H^{(1)}K^{(1)}},\ldots,\rho_{H^{(d)}K^{(d)}}$ to be admissible is
\[ \theta(H^{(i+1)}, K^{(i+1)}, H^{(i)}) = K^{(i+1)}\forall i\in\{1,\ldots,d-1\}. \]
Based on this, we report fractions of admissible CReLU architectures (Table~\ref{table:admissible-crelu}) in direct analogy to Table~\ref{table:admissible}. 
At depth $4$, there are fewer admissible CReLU architectures than admissible $G$-DNN architectures.

\begin{table}
\centering
\caption{\label{table:admissible-ico} %
Numbers of admissible CReLU and $G$-DNN architectures at every depth for the symmetry group $G$ of the icosahedron. 
Only architectures corresponding to sequences of irreps of strictly decreasing degree are considered. 
We do not report the total number of such sequences as we found every sequence to correspond to an admissible $G$-DNN architecture.
}
\begin{tabular}{ccc}
\toprule
Depth & CReLU & $G$-DNN \\
\midrule
2 & 20 & 20 \\
3 & 136 & 142 \\
4 & 441 & 516 \\
5 & 776 & 1089 \\
6 & 769 & 1392 \\
7 & 407 & 1064 \\
8 & 90 & 448 \\
9 & 0 & 80 \\
Total & 2639 & 4751 \\
\bottomrule
\end{tabular}
\end{table}

As a more striking example, we consider the symmetry group of the icosahedron used later for 3D object classification in Sec.~\ref{sec:3d}. 
We report the numbers of admissible CReLU and $G$-DNN architectures for this group (Table~\ref{table:admissible-ico}). 
There are about $1.8\times$ as many admissible $G$-DNN architectures as there are admissible CRELU architectures, 
including nine-layer $G$-DNNs with no CRELU architectures of the same depth. 
We conclude that while the CRELU activation function gives a simple way to implement equivariance with respect to signed perm-reps, 
our $G$-DNNs grant us access to large parts of $G$-invariant architecture space previously unreachable.

\subsection{Binary multiplication}
\label{sec:prod}

Our first example application is an exact result and demonstrates that $G$-DNNs with type 2 signed perm-reps occur even in the context of simple mathematical operations.

\begin{example} \label{ex:prod}
For $d\geq 3$, let $m = 2^d$ and 
\[ \mathcal{X} = \{x\in \{0, 1\}^m: x_{2i-1}+x_{2i}=1\forall i\in\{1,\ldots,m/2\}\}. \]
For $\{e_1,\ldots,e_m\}$ the standard orthonormal basis, 
Let $G < \operatorname{P}(m)$ be the subgroup of all even permutation matrices that only transpose $e_{2i-1}$ and $e_{2i}$. 
Define the function $f:\mathcal{X}\mapsto \{-1, 1\}$ by
\[ f(x) = \prod_{i=1}^{m/2} (x_{2i-1}-x_{2i}). \]
Then $f$ admits an expression as a $G$-DNN of depth $d$ where:
\begin{enumerate}[label={(\alph*)}]
\item \label{ex:prod:a}
For $i\in\{1,\ldots,d\}$ and $j\in\{1,\ldots,i\}$, the blocks $V^{(i)}_j$ of the latent weight matrices (see Thm.~\myref{thm:V}{a}) are given by
\[ V^{(i)}_j = 
\begin{cases}
I_{m/2^{i+1}}\otimes \lmat 1 & -1 & 1 & -1\\ 1 & -1 & -1 & 1\rmat, & \mbox{ if } 1 \leq i=j < d \\
\lmat 1 & -1\rmat, & \mbox{ if } i=j=d \\
0, & \mbox{ otherwise.}
\end{cases} \]
\item \label{ex:prod:b}
For $i\in\{1,\ldots,d-1\}$, the $i$th layer transforms by the signed perm-rep
\[ \rho^{(i)} = 
\begin{cases}
\bigoplus_{j=1}^{m/2^{i+1}} \rho_{H^{(i)}_j K^{(i)}_j}, & \mbox{ if } i \leq d-2 \\
\rho_{H^{(d-1)}_1 K^{(d-1)}_1}\oplus \rho_{H^{(d-1)}_1 K^{(d-1)}_1}, & \mbox{ if } i=d-1,
\end{cases} \]
where
\begin{align*}
K^{(1)}_j &= \{g\in G: gv_j = v_j\} \\
H^{(1)}_j &= \{g\in G: gv_j = \pm v_j\},
\end{align*}
where $v_j = e_{4j-3}-e_{4j-2}+e_{4j-1}-e_{4j}$, and
\begin{align*}
K^{(i+1)}_j &= H^{(i)}_{2j-1}\cap H^{(i)}_{2j} \\
H^{(i+1)}_j &= (H^{(i)}_{2j-1}\cap H^{(i)}_{2j})\cup ((G\setminus H^{(i)}_{2j-1})\cap (G\setminus H^{(i)}_{2j})),
\end{align*}
for $i\in\{1,\ldots,d-2\}$. 
The rep $\rho^{(d)}$ is trivial as usual.
\end{enumerate}
\end{example}

Here, $f(x)$ is the product of $\frac{m}{2}$ binary elements in $\{-1, 1\}$ 
but where the elements are each represented as 2D one-hot vectors and are then concatenated into the $m$-dimensional input $x$. 
From the perspective of multiplication of elements in $\{-1, 1\}$, 
the group $G$ corresponds to an even number of sign flips, 
which clearly leaves the final product invariant. 
Example~\ref{ex:prod} gives an explicit construction of a $G$-DNN that implements the function $f$. 
Each layer of the $G$-DNN partitions its input into pairs and takes each of their products; 
the network thus iteratively coarsegrains the input until the final scalar product is returned.%
\footnote{%
Although $f$ is a simple function, 
we were unable to construct a $G$-DNN---in particular, a shallower $G$-invariant architecture that (1) has $G$-equivariant preactivations and (2) does not have an exponential number of hidden neurons per layer---simpler than the one given in Ex.~\ref{ex:prod} that fits $f$. 
For example, one could compute $f(x)$ with a shallow network that first maps $x$ linearly into $\{-1, 1\}^{\frac{m}{2}}$, then takes the sum $\vec{1}^{\top} x$, and finally returns the parity of the sum using a suitable combination of ReLU neurons; 
however, the function $x\rightarrow \vec{1}^{\top} x$ is not $G$-equivariant.%
} 
Each latent weight matrix of the $G$-DNN has a block-diagonal structure and no latent skip connections; 
however, by Thm.~\myref{thm:V}{a}, the $G$-DNN does have skip connections in terms of the apparent weight matrices. 
All signed perm-irreps in the network--except $\rho^{(d)}$ which must be trivial--are type 2, 
and the image of each one is isomorphic to the Klein-4 group. 
The image of the penultimate rep $\rho^{(d-1)}$ is also isomorphic to the Klein-4 group, 
but the rep itself is not irreducible and decomposes into two copies of a type 2 scalar irrep.

\begin{table}
\centering
\caption{\label{table:prod} %
Binary cross-entropy losses (mean and standard deviation over $24$ random initialization seeds) of four $G$-DNN architectures (see main text) on the binary multiplication problem. 
We report both the training loss and validation loss before training (initial) and after $5$ epochs of training (final). 
Training and validation losses are equal because each of the two class labels corresponds to exactly one $G$-orbit, 
over which each $G$-DNN is constant-- 
regardless of the dataset split. 
All values correspond to a classification accuracy of $50\%$, 
except the final losses of zero for the type 2 architecture corresponding to 100\% accuracy.
}
\begin{tabular}{cS[table-format=2.2(2)]S[table-format=2.2(2)]SS}
\toprule
Architecture & {Initial train} & {Initial val} & {Final train} & {Final val} \\
\midrule
Type 1 & 5.11\pm 4.68 & 5.11\pm 4.68 & 0.71\pm 0.04 & 0.71\pm 0.04 \\
\textbf{Type 2} & 1.33\pm 0.60 & 1.33\pm 0.60 & 0.00\pm 0.00 & 0.00\pm 0.00 \\
Unraveled (random init) & 11.36\pm 12.34 & 11.36\pm 12.34 & 0.71\pm 0.03 & 0.71\pm 0.03 \\
Unraveled (type 2 init) & 1.33\pm 0.60 & 1.33\pm 0.60 & 0.70\pm 0.00 & 0.70\pm 0.00 \\
\bottomrule
\end{tabular}
\end{table}

We investigate Ex.~\ref{ex:prod} empirically by generating the complete dataset $\{(x, f(x)): x\in \mathcal{X}\}$ for $m=16$. 
Since $f(x)=\pm 1$, then we regard the estimation of $f$ as a binary classification problem. 
We use a random 20\% of the dataset (with class stratification) for training and the rest for validation. 
We instantiate the ``type 2'' architecture with the type 2 signed perm-irreps in Ex.~\myref{ex:prod}{b} and randomly initialize its weights. 
We compare the type 2 architecture to two type 1 baselines: 
(1) the ``type 1'' architecture obtained by sending%
\footnote{This transformation is called topological tunneling in the literature~\citep{agrawal2022classification}} %
$\rho_{HK}\rightarrow \rho_{HH}$ (see Thm.~\ref{thm:orthogonal}) and 
(2) the ``unraveled'' architecture obtained by sending $\rho_{HK}\rightarrow \rho_{KK}$ (seeThm.~\myref{thm:signed2ordinary}{d}). 
The type 1, type 2, and unraveled architectures are thus analogous to the three weightsharing patterns in Fig.~\ref{fig:patterns} (see Sec.~\ref{sec:central}); 
i.e., the type 1 architecture has the same number of hidden neurons as the type 2 architecture but expresses different functions, 
and the unraveled architecture has double the number of hidden neurons and the capacity to express both the smaller two architectures as well as much more.

We trained all architectures for $5$ epochs with the Adam optimizer with minibatch size $64$, learning rate $0.01$, and learning rate decay $0.99$ per step. 
Starting at 50\% classification accuracy, the type 2 architecture quickly achieves 100\% training and validation accuracy as well as a final binary cross-entropy loss of $0.00$ (Table~\ref{table:prod}). 
All other architectures remain stuck at 50\% accuracy even after training. 
To explain these results, we first note that in the type 2 architecture, the latent weight matrices for $i\in\{1,\ldots,d-1\}$ are constrained to have the form
\[ V^{(i)}_j = 
\begin{cases}
I_{m/2^{i+1}}\otimes \lmat u^{(i)}_j & -u^{(i)}_j & v^{(i)}_j & -v^{(i)}_j \\ u^{(i)}_j & -u^{(i)}_j & -v^{(i)}_j & v^{(i)}_j \rmat, & \mbox{ if } 1 \leq j=i < d \\
0, & \mbox{ if } 1 \leq j < i < d,
\end{cases} \]
where the $u^{(i)}_j$ and $v^{(i)}_j$ are free learnable parameters. 
In the type 1 architecture, the negative signs in front of $v^{(i)}_j$ are omitted, 
and the off-diagonal blocks are no longer zero but are instead constrained in the same way as the diagonal blocks. 
As a result, the very first linear transformation $g^{(1)}(x) = V^{(1)}x$ in the architecture is invariant to every transposition $(x_{2i-1}, x_{2i})\rightarrow (x_{2i}, x_{2i-1})$-- not just even numbers of them; 
the rest of the architecture is thus unable to distinguish the two classes of inputs, resulting in a flat $50\%$ accuracy. 
The type 1 architecture simply does not have the capacity to solve the binary classification problem.

In contrast to the type1 and type 2 architectures, the latent weight matrices $V^{(i)}_i$ of the unraveled architectures are not even constrained to be block-diagonal, 
and the $V^{(i)}_j$ for $j<i$ are not constrained to be zero. 
The unraveled architecture thus has about $6.5\times$ the number of free learnable parameters compared to the type 2 architecture, 
and it must learn the block-diagonal weight structure from data alone. 
Based on the training loss, the complete failure of the unraveled architecture is likely due to a severe trainability issue. 
To confirm the unraveled architecture indeed has the capacity to express the solution and in particular the type 2 architecture, 
we loaded the trained weights of the type 2 architecture into the unraveled architecture by an appropriate mapping and confirmed that the resulting architecture indeed achieves 100\% training and validation accuracy. 
To test if the failure to train is due to poor initialization, we implemented the unraveled architecture with both random initialization (called ``random init'' in Table~\ref{table:prod}) and with the randomly initialized weights of the type 2 architecture (called ``type 2 init'' in Table~\ref{table:prod}). 
While initialization with the initial weights of the type 2 architecture reduces the initial loss of the unraveled architecture to match the type 2 architecture, it does not solve the trainability issue. 
We thus conclude the unraveled architecture simply does not have sufficient inductive bias to solve the binary classification problem.

\subsection{3D object classification}
\label{sec:3d}

\begin{wrapfigure}{R}{0.5\textwidth}
\centering
\includegraphics[trim={0 0 0 0.75cm}, clip, width=0.4\textwidth]{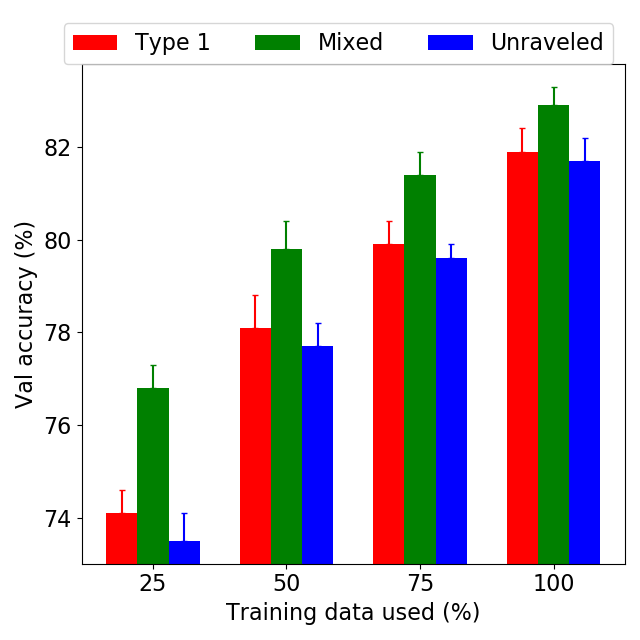}
\caption{\label{fig:mn40} %
Validation classification accuracies (mean and standard deviation over $24$ random initialization seeds) for three $G$-DNN architectures (see main text) on the ModelNet40 dataset using different percentages of the full training data. 
The mixed architecture---the only one including type 2 signed perm-irreps---clearly outperforms the two baselines, 
with the performance gap increasing as less training data is available.
}
\end{wrapfigure}

Our second example application demonstrates that $G$-DNNs with type 2 signed perm-reps carry inductive bias that can be useful ``in the wild''. 
We consider the ModelNet40 dataset\citep{wu20153d}, which contains $9843$ training and $2468$ validation samples of 3D CAD mesh representations of $40$ different objects ranging from airplanes to toilets. 
The problem is to predict the object class given an input mesh. 
We preprocess the data in identical fashion to \citet{jiang2019spherical}.%
\footnote{%
\citet{jiang2019spherical} perform five iterations of refinement to increase resolution, whereas we only perform two. 
This is because our current $G$-DNN implementation is based on fully-connected linear layers and thus does not scale well to high resolution. 
We plan to remedy this in future work by introducing local kernel windows. 
We attribute the difference in accuracy reported in Fig.~\ref{fig:mn40} and that reported by \citet{jiang2019spherical} to this difference in resolution.
} %
Specifically, we first bound each mesh in the unit sphere and discretize the sphere into an icosahedron. 
To increase resolution, we include the midpoint on each icosahedral edge as a vertex, normalize all vertices to have unit norm, and then repeat once more on the new polyhedron; 
this yields $162$ vertices in all. 
From each of these vertices, we perform ray-tracing to the origin; 
we record the distance from the sphere to the mesh as well as the sine and cosine of the incident angle. 
The representation is further augmented with the three channels corresponding to the convex hull of the input mesh, yielding a $6$-channel input representation over $162$ points.

The relevant symmetry group $G$ is the symmetry group of the icosahedron, 
generated by (1) the order-$3$ rotation about the normal vector to one of the faces 
and (2) the order-$5$ rotation about one of the vertices; 
the group has $60$ elements and is isomorphic to the alternating group $A_5$. 
We consider $G$-DNN architectures with four layers and $16$, $32$, $64$, and $40$ output channels per rep in each respective layer. 
We specify apriori the degrees of the signed perm-irreps to be used in each layer as $30$, $15$, $10$, and $1$; 
we then enumerate the admissible architectures as described after Thm.~\ref{thm:phi}, and we select the following designs: 
For the ``mixed'' architecture, we select one type 1 and one type 2 signed perm-irrep in each of the first three layers%
\footnote{In each layer, the selected type 1 and type 2 signed perm-irreps are cohomologous as defined in \citet{agrawal2022classification}; 
i.e., they have the forms $\rho_{HH}$ and $\rho_{HK}$ respectively.%
}%
we select the trivial rep in the last layer, as always. 
We compare the mixed architecture to the corresponding ``type 1'' and ``unraveled'' architectures, 
which are constructed exactly as done in Sec.~\ref{sec:prod}. 

We trained all architectures for $500$ epochs with the Adam optimizer with minibatch size $64$, learning rate $0.01$, and learning rate decay $0.99$ per step. 
We included batchnorm as described in Sec.~\ref{sec:remarks} after each ReLU layer. 
In each run, we performed retroactive early stopping by recording the highest validation accuracy achieved over all epochs. 
We find that the mixed architecture---the only one containing type 2 signed perm-irreps---significantly outperforms the two baseline architectures in terms of validation accuracy, 
with the performance gap increasing as less training data is used. 
This suggests that the mixed architecture carries stronger inductive bias, still consistent with the ground truth, as compared to the baselines.

\section{Conclusion}
\label{sec:conclusion}

We have introduced the $G$-DNN, a $G$-invariant densely connected DNN architecture. 
In contrast to previous $G$-invariant architectures in the literature such as the $G$-CNN~\citep{cohen2016group}, the $G$-DNN is built with \emph{signed} perm-reps that do not require individual layers of the network to be $G$-equivariant. 
The result is a richer family of $G$-invariant architectures never seen before (carefully quantified in terms of admissible architecture count), 
and we have demonstrated with both theoretical and empirical examples that some of these novel architectures can boost predictive performance.

To be clear, we do not claim the $G$-DNN to be a new state-of-the-art (SOTA) for $G$-invariant deep learning. 
Rather, our work is a demonstration that signed perm-reps, combined with skip connections, are mathematically natural building blocks for deep $G$-invariant architectures with practical potential. 
Indeed, we suspect that the structures and ideas presented in this paper, once extended and combined with domain-specific bells and whistles, could in fact help to boost the performance of SOTA $G$-invariant architectures.

Even at the domain-agnostic level, however, several open questions remain for future research. 
First, can we extend $G$-DNNs from $G$-invariance to $G$-equivariance? 
The only obstacle here is the construction of architectures guaranteed to be admissible. 
Second, can we perform $G$-NAS to find the optimal signed perm-irreps to use in each layer? 
Third and finally, are there ways of enforcing $G$-invariance that go even beyond the $G$-DNN architectures described in this paper? 
A complete classification of \emph{all} $G$-invariant architectures would give us the finest possible control on inductive bias, thereby allowing us---in principle---to optimize predictive performance on $G$-invariant problems.


\acks{%
D.A. was supported by NSF award No.~2202990. 
J.O. was supported by DOE grant DE-SC0018175.%
}

\appendix
\section{Signed permutation representations}
\label{app:rho}

\subsection{Central hypothesis}

\begin{proof}[Proof of Thm.~\ref{thm:orthogonal}]
For every $J\leq G$, let $P_J$ be the $m\times m$ orthogonal projection operator onto the subspace of $\RR^m$ pointwise-invariant under the action of $J$:
\[ P_J = \frac{1}{|J|}\sum_{g\in J} g. \]
By Thm.~4b of \citet{agrawal2022classification}, there exists $w_2\in\ran(P_{K_2}-(|H:K_2|-1)P_H)$ and a transversal $\{g_1,\ldots,g_n\}$ of $G/H$ such that
\[ W_2 = \lmat g_1w_2 & g_2w_2 & \cdots & g_nw_2 \rmat^{\top}. \]
Moreover, there exists $w_1\in\ran(P_{K_1}-(|H:K_1|-1)P_H)$ and $P\in\operatorname{P}(n)$ such that
\[ PW_1 = \lmat g_1w_1 & g_2w_1 & \cdots & g_nw_1 \rmat^{\top}. \]
(The permutation matrix $P$ is in general necessary so we can use the same transversal of $G/H$.) 
We have
\begin{align*}
\diag(PW_1W_2^{\top}) 
&= \{(g_iw_1)^{\top}g_iw_2\}_{i=1}^n \\
&= \{w_1^{\top}g_i^{\top}g_iw_2\}_{i=1}^n \\
&= \{w_1^{\top}w_2\}_{i=1}^n \\
&= 0,
\end{align*}
where the last step follows by Prop.~6 of \citet{agrawal2022classification}.
\end{proof}

\begin{proof}[Proof of Thm.~\ref{thm:signed2ordinary}]
\textbf{(a) } %
Let $\pi:G\mapsto\operatorname{P}(n)$ and $\zeta:G\mapsto\operatorname{Z}(n)$ be the unique functions satisfying $\rho(g) = \pi(g)\zeta(g)\forall g\in G$ 
(note $\pi$ should not be confused with $\pi_{\rho}$ appearing in the theorem statement). 
Evaluating the Kronecker product, 
the function $\pi_{\rho}$ can be rewritten as
\begin{align*}
\pi_{\rho}(g) 
&= \heavi\left(\lmat \rho(g) & -\rho(g)\\ -\rho(g) & \rho(g)\rmat\right) \\
&= \heavi\left(\lmat \pi(g)\zeta(g) & -\pi(g)\zeta(g)\\ -\pi(g)\zeta(g) & \pi(g)\zeta(g)\rmat\right) \\
&= \heavi\left(\lmat \pi(g) & 0\\ 0 & \pi(g)\rmat \lmat \zeta(g) & -\zeta(g)\\ -\zeta(g) & \zeta(g)\rmat\right) \\
&= \lmat \pi(g) & 0\\ 0 & \pi(g)\rmat \heavi\left(\lmat \zeta(g) & -\zeta(g)\\ -\zeta(g) & \zeta(g)\rmat\right),
\end{align*}
where in the last step we exploited the permutation-equivariance of elementwise operations. 
Now since $\heavi(z) = \frac{1+z}{2}$ for $z\in\{-1, 1\}$, then we have
\begin{align*}
\pi_{\rho}(g) 
&= \lmat \pi(g) & 0\\ 0 & \pi(g)\rmat \frac{1}{2}\left(\lmat I_n & I_n\\ I_n & I_n\rmat + \lmat \zeta(g) & -\zeta(g)\\ -\zeta(g) & \zeta(g)\rmat\right) \\
&= \frac{1}{2}\left(\lmat \pi(g) & \pi(g)\\ \pi(g) & \pi(g)\rmat + \lmat \pi(g)\zeta(g) & -\pi(g)\zeta(g)\\ -\pi(g)\zeta(g) & \pi(g)\zeta(g)\rmat\right) \\
&= \frac{1}{2}\lmat \pi(g)+\rho(g) & \pi(g)-\rho(g)\\ \pi(g)-\rho(g) & \pi(g)+\rho(g)\rmat. (*)
\end{align*}

Now, let $\pi(g)_{ij}$ (resp. $\rho(g)_{ij}$) be the unique nonzero element in the $i$th row of $\pi(g)$ (resp. $\rho(g)$). 
Since $\rho(g)_{ij} = \pm \pi(g)_{ij}$, then exactly one of $\frac{1}{2}[\pi(g)_{ij} \pm \rho(g)_{ij}]$ is unity, 
while the other is zero. 
Thus, every row of (*) is a one-hot vector. 
The same argument can be made for the columns, and hence (*) is a bona fide permutation matrix; 
i.e., $\pi_{\rho}:G\mapsto\operatorname{P}(2n)$ is a well-defined function.

All that remains is to show $\pi_{\rho}$ is a homomorphism. 
We rewrite (*) as
\[ \pi_{\rho}(g) = \frac{1}{2}\left(\lmat I_n\\ I_n\rmat \pi(g) \lmat I_n & I_n\rmat + \lmat I_n\\ -I_n\rmat \rho(g) \lmat I_n & -I_n\rmat\right). \]
Then for $g,h\in G$, it is easy to verify that
\begin{align*}
\pi_{\rho}(g)\pi_{\rho}(h) 
&= \frac{1}{4}\left(\lmat I_n\\ I_n\rmat \pi(g) \lmat I_n & I_n\rmat + \lmat I_n\\ -I_n\rmat \rho(g) \lmat I_n & -I_n\rmat\right) \left(\lmat I_n\\ I_n\rmat \pi(h) \lmat I_n & I_n\rmat \right. \\
&\quad + \left. \lmat I_n\\ -I_n\rmat \rho(h) \lmat I_n & -I_n\rmat\right) \\
&= \frac{1}{4}\left(2\lmat I_n\\ I_n\rmat \pi(g)\pi(h) \lmat I_n & I_n\rmat + 0 + 0 + 2\lmat I_n\\ -I_n\rmat \rho(g)\rho(h) \lmat I_n & -I_n\rmat\right) \\
&= \pi_{\rho}(gh).
\end{align*}

\textbf{(b) } %
Using (*) from part (a), we have
\begin{align*}
\pi_{\rho}(g) \lmat W\\ -W\rmat 
&= \frac{1}{2}\lmat \pi(g)+\rho(g) & \pi(g)-\rho(g)\\ \pi(g)-\rho(g) & \pi(g)+\rho(g) \rmat \lmat W\\ -W\rmat \\
&= \frac{1}{2}\lmat \pi(g)W+\rho(g)W -\pi(g)W+\rho(g)W \\ \pi(g)W-\rho(g)W -\pi(g)W-\rho(g)W \rmat \\
&= \frac{1}{2}\lmat 2\rho(g)W \\ -2\rho(g)W \rmat \\
&= \rho(g) \lmat W\\ -W\rmat \\
&= \lmat W\\ -W\rmat g,
\end{align*}
proving the claim.

\textbf{(c) } %
For the forward implication, we prove its contrapositive; 
suppose $\rho$ is type 1. 
Then $\pi = \rho$ so that (*) implies
\[ \pi_{\rho} = \frac{1}{2}\lmat 2\pi(g) & 0\\ 0 & 2\pi(g)\rmat = \lmat\pi(g) & 0\\ 0 & \pi(g)\rmat, \]
which is clearly reducible.

For the reverse implication, suppose $\rho$ is type 2. 
To show $\pi_{\rho}$ is irreducible, we will show it is transitive on the standard orthonormal basis $\{\omega_1,\ldots,\omega_{2n}\}$. 
Let $i,j\in\{1,\ldots,2n\}$, 
and without loss of generality suppose $i\leq n$.

\textbf{Case 1: } %
Suppose $j\leq n$. 
Then $\omega_i$ is just $e_i$ (the $i$th standard orthonormal basis vector of dimension $n$) concatenated with $\vec{0}_n$, 
and similar for $\omega_j$. 
Now since $\rho$ is irreducible, then there exists $g\in G$ such that $\rho(g)e_i = e_j$; 
note $\pi(g)e_i = e_j$ as well. 
We thus have
\begin{align*}
\pi_{\rho}(g)\omega_i 
&= \frac{1}{2}\lmat\pi(g)+\rho(g) & \pi(g)-\rho(g)\\ \pi(g)-\rho(g) & \pi(g)+\rho(g)\rmat \lmat e_i\\ \vec{0}_n\rmat \\
&= \frac{1}{2}\lmat \pi(g)e_i+\rho(g)e_i \\ \pi(g)e_i-\rho(g)e_i\rmat \\
&= \frac{1}{2}\lmat e_j+e_j\\ e_j-e_j\rmat \\
&= \lmat e_j\\ \vec{0}_n\rmat \\
&= \omega_j,
\end{align*}
establishing transitivity in this case.

\textbf{Case 2: } %
Suppose instead $j>n$. 
Then $\omega_j$ is the concatenation of $\vec{0}_n$ and $e_{j-n}$. 
Since $\rho$ is irreducible, then there exists $g\in G$ such that $\rho(g)e_i = -e_{n-j}$. 
The rest of the proof proceeds in analogy to case 1.

\textbf{(d) } %
For every $v\in\RR^n$ and linear rep $\tau:G\mapsto\operatorname{GL}(n, \RR)$, define the stabilizer subgroup
\[ \st_{\tau}(v) = \{g\in G: \tau(g)v = v\}. \]
Since $\rho = \rho_{HK}$, then $\st_{\rho}(e_1) = K$. 
Since $\pi_{\rho}$ is an ordinary perm-rep, then all we must show is $\st_{\pi_{\rho}}(\omega_1) = K$ to establish the claim. 
Similar to part (c), $\st_{\pi_{\rho}}(\omega_1)$ is the set of all $g\in G$ such that
\begin{align*}
\pi_{\rho}(g)\omega_1 &= \omega_1 \\
\frac{1}{2}\lmat\pi(g)+\rho(g) & \pi(g)-\rho(g)\\ \pi(g)-\rho(g) & \pi(g)+\rho(g)\rmat \lmat e_1\\ \vec{0}_n\rmat &= \lmat e_1\\ \vec{0}_n\rmat \\
\frac{1}{2}\lmat \pi(g)e_1+\rho(g)e_1\\ \pi(g)e_1-\rho(g)e_1\rmat &= \lmat e_1\\ \vec{0}_n\rmat.
\end{align*}
Taking the difference of the two rows, we obtain $\rho(g)e_1 = e_1$; 
hence, $\st_{\pi_{\rho}}(\omega_1) = \st_{\rho}(e_1) = K$, completing the proof.
\end{proof}

\section{\texorpdfstring{$G$}{G}-invariant deep neural networks}
\label{app:gdnn}

\subsection{Parameterization redundancies}
\label{app:parameter}

To understand the inclusion of skip connections as a reparameterization, we rewrite Eq.~\ref{eq:fi} as
\begin{align*}
f^{(i+1)}(x) 
&= 
\lmat
\relu(W^{(i)} f^{(i)}(x)+b^{(i)}) \\
\relu(f^{(i)}(x)) - \relu(-f^{(i)}(x))
\rmat \\
&= 
\lmat
I_{n_{i+1}} & 0 & 0 \\
0 & I_{N_i} & -I_{N_i}
\rmat 
\relu\left(
\lmat
W^{(i)} \\
I_{N_i} \\
-I_{N_i}
\rmat
f^{(i)}(x) + 
\lmat
b^{(i)} \\
0 \\
0
\rmat
\right).
\end{align*}
The outer matrix in the last equation can be combined with the matrix in the next layer; 
the result is a DNN having the same depth as the original---and representing the same input-output function---but with no skip connections, as they have been transformed into additional ReLU neurons.

\begin{proof}[Proof of Prop.~\ref{prop:CPZ}]
We will show $W^{(i+1)} f^{(i+1)}(x) + b^{(i+1)}$ is invariant under the transformation. 
The function $f^{(i+1)}$ transforms as
\begin{align*}
f^{(i+1)}(x) 
&\rightarrow 
\lmat \relu(CPZ W^{(i)}f^{(i)}(x) + CPZ b^{(i)}) \\ f^{(i)}(x) \rmat \\
&= 
\lmat CP & 0 \\ 0 & I_{N_i} \rmat
\lmat \relu(Z(W^{(i)}f^{(i)}(x)+b^{(i)}) \\ f^{(i)}(x) \rmat \\
&= 
\lmat CP & 0 \\ 0 & I_{N_i} \rmat 
\left( \lmat \relu(W^{(i)}f^{(i)}(x)+b^{(i)}) \\ f^{(i)}(x) \rmat 
- \lmat \heavi(-Z)(W^{(i)}f^{(i)}(x) + b^{(i)}) \\ 0 \rmat \right) \\
&= 
\lmat CP & 0 \\ 0 & I_{N_i} \rmat 
\left( \lmat I_{n_{i+1}} & -\heavi(-Z)W^{(i)} \\ 0 & I_{N_i} \rmat 
\lmat \relu(W^{(i)}f^{(i)}(x)+b^{(i)}) \\ f^{(i)}(x) \rmat 
- \lmat \heavi(-Z)b^{(i)} \\ 0 \rmat \right) \\
&= 
\lmat CP & 0 \\ 0 & I_{N_i} \rmat 
\left( \lmat I_{n_{i+1}} & -\heavi(-Z)W^{(i)} \\ 0 & I_{N_i} \rmat 
\lmat \relu(W^{(i)}f^{(i)}(x)+b^{(i)}) \\ f^{(i)}(x) \rmat \right. \\
&\quad - \left. \lmat I_{n_{i+1}} & -\heavi(-Z)W^{(i)} \\ 0 & I_{N_i} \rmat \lmat \heavi(-Z)b^{(i)} \\ 0 \rmat \right) \\
&= 
\lmat CP & 0 \\ 0 & I_{N_i} \rmat 
\lmat I_{n_{i+1}} & -\heavi(-Z)W^{(i)} \\ 0 & I_{N_i} \rmat 
\left( \lmat \relu(W^{(i)}f^{(i)}(x)+b^{(i)}) \\ f^{(i)}(x) \rmat - \lmat \heavi(-Z)b^{(i)} \\ 0 \rmat \right) \\
&= 
\lmat CP & -CP\heavi(-Z)W^{(i)} \\ 0 & I_{N_i} \rmat 
\left( f^{(i+1)}(x) - \lmat \heavi(-Z)b^{(i)} \\ 0 \rmat \right).
\end{align*}
We thus have
\begin{align*}
W^{(i+1)}f^{(i+1)}(x)+b^{(i+1)} 
&\rightarrow 
W^{(i+1)} \lmat (CP)^{-1} & \heavi(-Z)W^{(i)} \\ 0 & I_{N_i} \rmat 
\lmat CP & -CP\heavi(-Z)W^{(i)} \\ 0 & I_{N_i} \rmat 
\left( f^{(i+1)}(x) \right. \\
&\quad - \left. \lmat \heavi(-Z)b^{(i)} \\ 0 \rmat \right) 
+ b^{(i+1)} + W^{(i+1)}\lmat \heavi(-Z)b^{(i)} \\ 0 \rmat \\
&= 
W^{(i+1)} 
\left( f^{(i+1)}(x) - \lmat \heavi(-Z)b^{(i)} \\ 0 \rmat \right) 
+ W^{(i+1)}\lmat \heavi(-Z)b^{(i)} \\ 0 \rmat + b^{(i+1)} \\
&= 
W^{(i+1)} f^{(i+1)}(x) + b^{(i+1)}.
\end{align*}
\end{proof}

\subsection{\texorpdfstring{$G$}{G}-invariant architectures}
\label{app:equivariant}

\begin{proof}[Proof of Lemma~\ref{lemma:psi}]
Since $f^{(1)}$ and $\psi^{(1)}$ are both the identity functions, then the claim is immediate for $i=1$. 
Suppose the claim is true for some $i\in\{1,\ldots,d-1\}$. 
We have for all $g\in G$ and $x\in\RR^m$:
\begin{align*}
f^{(i+1)}(gx) 
&= \lmat \relu(W^{(i)}f^{(i)}(gx)+b^{(i)}) \\ f^{(i)}(gx) \rmat \\
&= \lmat \relu(W^{(i)}\psi^{(i)}(g)f^{(i)}(x)+b^{(i)}) \\ \psi^{(i)}(g)f^{(i)}(x) \rmat \\
&= \lmat \relu(\rho^{(i)}(g)W^{(i)}f^{(i)}(x)+b^{(i)}) \\ \psi^{(i)}(g)f^{(i)}(x) \rmat \\
&= \lmat \relu(\rho^{(i)}(g)W^{(i)}f^{(i)}(x)+\rho^{(i)}(g)b^{(i)}) \\ \psi^{(i)}(g)f^{(i)}(x) \rmat \\
&= \lmat \pi^{(i)}(g)\relu(\zeta^{(i)}(g)(W^{(i)}f^{(i)}(x)+b^{(i)})) \\ \psi^{(i)}(g)f^{(i)}(x) \rmat \\
&= \lmat \pi^{(i)}(g)\relu(W^{(i)}f^{(i)}(x)+b^{(i)}) \\ \psi^{(i)}(g)f^{(i)}(x) \rmat
- \lmat \pi^{(i)}(g)\heavi(-\zeta^{(i)}(g))W^{(i)}f^{(i)}(x) \\ 0 \rmat \\
&\quad - \lmat \pi^{(i)}(g)\heavi(-\zeta^{(i)}(g))b^{(i)} \\ 0 \rmat \\
&= \lmat \pi^{(i)}(g) & -\pi^{(i)}(g)\heavi(-\zeta^{(i)}(g))W^{(i)} \\ 0 & I_{N_i} \rmat f^{(i+1)}(x) 
+ \lmat -\pi^{(i)}(g)\heavi(-\zeta^{(i)}(g))b^{(i)} \\ 0 \rmat.
\end{align*}
Note that
\begin{align*}
-\pi^{(i)}(g)\heavi(-\zeta^{(i)}(g)) 
&= -\frac{1}{2}\pi^{(i)}(g)(I_{n_{i+1}}-\zeta^{(i)}(g)) \\
&= \frac{1}{2}\pi^{(i)}(g)(\zeta^{(i)}(g)-I_{n_{i+1}}) \\
&= \frac{1}{2}(\rho^{(i)}(g) - \pi^{(i)}(g)).
\end{align*}
We thus have
\begin{align*}
-\pi^{(i)}(g)\heavi(-\zeta^{(i)}(g))W^{(i)} 
&= \frac{1}{2}(\rho^{(i)}(g) - \pi^{(i)}(g))W^{(i)} \\
&= \frac{1}{2}(\rho^{(i)}(g)W^{(i)} - \pi^{(i)}(g)W^{(i)}) \\
&= \frac{1}{2}(W^{(i)}\psi^{(i)}(g) - \pi^{(i)}(g)W^{(i)}).
\end{align*}
For the bias term, let $\rho^{(i)} = \rho^{(i)}_1\oplus\cdots \oplus\rho^{(i)}_k$ be the decomposition of $\rho^{(i)}$ into irreducibles; 
decompose $\pi^{(i)}$ and $b^{(i)}$ correspondingly. 
Then we have
\begin{align*}
-\pi^{(i)}(g)\heavi(-\zeta^{(i)}(g))b^{(i)} 
&= \frac{1}{2}(\rho^{(i)}(g) - \pi^{(i)}(g)) b^{(i)} \\
&= \frac{1}{2}\bigoplus_{j=1}^k (\rho^{(i)}_j(g) - \pi^{(i)}_j(g))b^{(i)}_j.
\end{align*}
If $\rho^{(i)}_j$ is type 1, then $\rho^{(i)}_j = \pi^{(i)}_j$ so that the $j$th term in the above direct sum is zero. 
On the other hand, if $\rho^{(i)}_j$ is type 2, then $\rho^{(i)}_j(g)b^{(i)}_j = b^{(i)}_j$ implies $b^{(i)}_j=0$, 
so that again the $j$th summand is zero. 
Therefore, $-\pi^{(i)}(g)\heavi(-\zeta^{(i)}(g))b^{(i)} = 0\forall g\in G$. 
We thus have
\begin{align*}
f^{(i+1)}(gx) 
&= \lmat \pi^{(i)}(g) & \frac{1}{2}(W^{(i)}\psi^{(i)}(g) - \pi^{(i)}(g)W^{(i)}) \\ 0 & I_{N_i} \rmat f^{(i+1)}(x) + 0 \\
&= \psi^{(i+1)}(g) f^{(i+1)}(x).
\end{align*}
The conclusion follows by induction.
\end{proof}

\begin{proof}[Proof of Lemma~\ref{lemma:unfold}]
We will verify that $\psi^{(i+1)}$ defined as claimed satisfies the recursion in Lemma~\ref{lemma:psi}. 
For $i=1$, we have
\begin{align*}
\psi^{(2)}(g) 
&= A^{(1)-1} \Pi^{(1)}(g) A^{(1)} \\
&= \lmat I_{n_2} & -\frac{1}{2}W^{(1)} \\ 0 & I_{n_1} \rmat^{-1} \lmat \pi^{(1)}(g) & 0 \\ 0 & g \rmat \lmat I_{n_2} & -\frac{1}{2}W^{(1)} \\ 0 & I_{n_1} \rmat \\
&= \lmat I_{n_2} & \frac{1}{2}W^{(1)} \\ 0 & I_{n_1} \rmat \lmat \pi^{(1)}(g) & 0 \\ 0 & g \rmat \lmat I_{n_2} & -\frac{1}{2}W^{(1)} \\ 0 & I_{n_1} \rmat \\
&= \lmat \pi^{(1)}(g) & \frac{1}{2}(W^{(1)}g - \pi^{(1)}(g)W^{(1)}) \\ 0 & g \rmat \\
&= \lmat \pi^{(1)}(g) & \frac{1}{2}(W^{(1)}\psi^{(1)}(g) - \pi^{(1)}(g)W^{(1)}) \\ 0 & \psi^{(1)}(g) \rmat,
\end{align*}
which indeed agrees with Lemma~\ref{lemma:psi}.

Now suppose the claim holds for some $i\in\{2,\ldots,d-1\}$. 
Observe that
\begin{align*}
A^{(i)} &= \lmat I_{n_{i+1}} & -\frac{1}{2}W^{(i)} \\ 0 & A^{(i-1)} \rmat \\
\Pi^{(i)}(g) &= \lmat \pi^{(i)}(g) & 0 \\ 0 & \Pi^{(i-1)}(g) \rmat.
\end{align*}
We thus have
\begin{align*}
\psi^{(i+1)}(g) 
&= A^{(i)-1} \Pi^{(i)}(g) A^{(i)} \\
&= 
\lmat I_{n_{i+1}} & -\frac{1}{2}W^{(i)} \\ 0 & A^{(i-1)} \rmat^{-1} 
\lmat \pi^{(i)}(g) & 0 \\ 0 & \Pi^{(i-1)}(g) \rmat 
\lmat I_{n_{i+1}} & -\frac{1}{2}W^{(i)} \\ 0 & A^{(i-1)} \rmat \\
&= 
\lmat I_{n_{i+1}} & \frac{1}{2}W^{(i)}A^{(i-1)-1} \\ 0 & A^{(i-1)-1} \rmat 
\lmat \pi^{(i)}(g) & 0 \\ 0 & \Pi^{(i-1)}(g) \rmat 
\lmat I_{n_{i+1}} & -\frac{1}{2}W^{(i)} \\ 0 & A^{(i-1)} \rmat \\
&= 
\lmat \pi^{(i)}(g) & \frac{1}{2}(W^{(i)}A^{(i-1)-1}\Pi^{(i-1)}(g)A^{(i-1)} - \pi^{(i)}(g)W^{(i)}) \\ 0 & A^{(i-1)-1}\Pi^{(i-1)}(g)A^{(i-1)} \rmat \\
&= \lmat \pi^{(i)}(g) & \frac{1}{2}(W^{(i)}\psi^{(i)}(g) - \pi^{(i)}(g)W^{(i)}) \\ 0 & \psi^{(i)}(g) \rmat.
\end{align*}
The conclusion follows by induction.
\end{proof}

\begin{proof}[Proof of Thm.~\ref{thm:V}]
\textbf{(a) } %
By Lemmas~\ref{lemma:psi}-\ref{lemma:unfold}, we have
\begin{align*}
\rho^{(i)}(g)W^{(i)} &= W^{(i)}\psi^{(i)}(g) \\
\rho^{(i)}(g)W^{(i)} &= W^{(i)}A^{(i-1)-1}\Pi^{(i-1)}(g)A^{(i-1)} \\
\rho^{(i)}(g)W^{(i)}A^{(i-1)-1} &= W^{(i)}A^{(i-1)-1}\Pi^{(i-1)}(g).
\end{align*}
Let $V^{(i)} = W^{(i)}A^{(i-1)-1}$. 
Thus, $W^{(i)} = V^{(i)}A^{(i-1)}$.

All that is left is to establish equivariance of the blocks of $V^{(i)}$. 
We have
\begin{align*}
\rho^{(i)}(g)V^{(i)} 
&= \rho^{(i)}(g)W^{(i)}A^{(i-1)-1} \\
&= W^{(i)}\psi^{(i)}(g)A^{(i-1)-1} \\
&= W^{(i)}A^{(i-1)-1}\Pi^{(i-1)}(g)A^{(i-1)}A^{(i-1)-1} \\
&= V^{(i)}\Pi^{(i-1)}(g).
\end{align*}
Since $\Pi^{(i-1)}(g)$ is a block-diagonal matrix, then blockwise equivariance follows.

\textbf{(b) } %
By part (a), we have
\begin{align*}
g^{(i)}(x) 
&= W^{(i)} f^{(i)}(x) \\
&= V^{(i)} A^{(i-1)} f^{(i)}(x) \\
&= V^{(i)} h^{(i)}(x).
\end{align*}
To establish the recursion for $h^{(i)}$, first observe that $A^{(i)}$ satisfies the recursion
\[ A^{(i)} = 
\lmat
I_{n_{i+1}} & -\frac{1}{2}W^{(i)} \\
0 & A^{(i-1)}
\rmat. \]
We thus have
\begin{align*}
h^{(i+1)}(x) 
&= A^{(i)} f^{(i+1)}(x) \\
&= 
\lmat
I_{n_{i+1}} & -\frac{1}{2}W^{(i)} \\
0 & A^{(i-1)}
\rmat \lmat
\relu(W^{(i)}f^{(i)}(x)+b^{(i)}) \\
f^{(i)}(x) 
\rmat \\
&= 
\lmat 
\relu(W^{(i)}f^{(i)}(x)+b^{(i)}) - \frac{1}{2}W^{(i)}f^{(i)}(x) \\
A^{(i-1)} f^{(i)}(x)
\rmat \\
&= 
\lmat
\relu(g^{(i)}(x)+b^{(i)}) - \frac{1}{2}g^{(i)}(x) \\
h^{(i)}(x)
\rmat,
\end{align*}
which completes the proof.
\end{proof}

\subsubsection{Implementation}
\label{app:forward}

Here we describe how Thm.~\ref{thm:V} translates into a concrete implementation. 
First, independently for every $i\in\{1,\ldots,d\}$ and $j\in\{1,\ldots,i\}$, we construct a basis set $\mathcal{B}^{(i)}_j$ for the space of all matrix solutions to the linear system
\[ \rho^{(i)}(g) X = X \pi^{(j-1)}(g)\forall g\in G. \]
This construction is done just once at the time of architecture initialization. 
We describe the construction below; 
for now, however, let us assume we have found such basis sets. 
Then we next constrain each latent weight matrix block $V^{(i)}_j$ to be a linear combination in $\mathcal{B}^{(i)}_j$. 
The linear coefficients are randomly initialized and are the trainable parameters of the $G$-DNN model. 
The forward pass is then a direct implementation of Thm.~\myref{thm:V}{b}, 
and its pseudocode is presented in Alg.~\ref{alg:forward}.

\begin{algorithm}
\caption{\label{alg:forward} %
Implementation of Thm.~\myref{thm:V}{b} for the forward pass of a $G$-DNN $f$. 
Here $g^{(i)}(x)$ and $h^{(i)}(x)$ should be regarded as variable names.
}
\begin{algorithmic}[1]
\Function{$f$}{$x$}
\State $h^{(1)}(x) \gets x$
\For{$i\in\{1,\ldots,d-1\}$}
	\State $g^{(i)}(x) \gets V^{(i)} h^{(i)}(x)$
	\State $h^{(i+1)}(x) \gets \lmat \relu(g^{(i)}(x)+b^{(i)}) - \frac{1}{2}g^{(i)}(x) \\ h^{(i)}(x) \rmat$
\EndFor
\State \Return $V^{(d)}g^{(d)}(x) + b^{(d)}$
\EndFunction
\end{algorithmic}
\end{algorithm}

All that remains is to describe the construction of the basis sets $\mathcal{B}^{(i)}_j$. 
We phrase this as the following general problem. 
Given a signed perm-rep $\rho$, an ordinary perm-rep $\pi$, and a generating set $G_0$ for the group $G$, 
we seek a basis set for the space of matrix solutions to the linear system
\[ \rho(g)X\pi(g)^{\top} = X\forall g\in G_0. \]
While we could proceed with standard methods of numerical linear algebra, there is a risk that numerical instabilities in the computation of the rank could lead to an incorrect number of basis matrices, which could break equivariance. 
We thus take a combinatorial approach to obtain an exact basis set as described next.

We construct a simple directed graph $\Gamma$ to encode the (signed) permutation of matrix elements under the mapping $X\rightarrow \rho(g)X\pi(g)^{\top}\forall g\in G_0$. 
The nodes of $\Gamma$ are the ordered pairs $(i, j)$ indexing the elements in $X$, 
and we draw an arc from $(i, j)$ to $(k, \ell)$ iff the former is sent to the latter under the above mapping for some $g\in G_0$; 
moreover, we assign the arc a value $z_{ij,k\ell}\in\{-1, 1\}$ to indicate whether a sign flip is incurred. 
With the graph $\Gamma$ in hand, a matrix $X$ is a solution to the linear equivariance condition iff all constraints $x_{ij} = z_{ij,k\ell}x_{k\ell}$ are satisfied. 
We are thus able to assign node values $x_{ij}$ independently across the connected components of $\Gamma$; 
based on this, we generate the desired basis set by processing each connected component $C$ as follows:
\begin{enumerate}[label={\textbf{Case~\arabic*.}}, align=left]
\item Suppose $C$ is $2$-colorable, meaning that each node $(i, j)\in C$ can be assigned a value $x_{ij}\in\{-1, 1\}$ such that the constraints $x_{ij} = z_{ij,k\ell}x_{k\ell}$ are satisfied within $C$. 
Then we select such an assignment (by greedy $2$-coloring), and we assign all nodes outside $C$ the value zero. 
These values together yield a basis matrix.
\item Suppose $C$ is not $2$-colorable. 
Then the only consistent assignment of values inside $C$ is all zeros, 
and no basis matrix is returned for this component.
\end{enumerate}
Observe that no two basis matrices share a nonzero value in the same position, 
and hence the matrices are indeed linearly independent.

\subsection{Admissible architectures}
\label{app:admissible}

\subsubsection{The \texorpdfstring{$\theta$}{theta} function}

The following proposition establishes the invariance and equivariance of the function $\theta$ with respect to certain conjugations.

\begin{proposition} \label{prop:theta_conj}
The function $\theta$ satisfies the following properties:
\begin{enumerate}[label={(\alph*)}]
\item \label{prop:theta_conj:a}
For every $A\in\operatorname{P}(|G/J|)$, define the ordinary perm-rep $\pi^A_J(g) = A\pi_J(g)A^{-1}$. 
Then $\theta(\cdot, \cdot, J)$ is invariant under the conjugation $\pi_J\mapsto\pi^A_J$.
\item \label{prop:theta_conj:b}
$\theta(H, K, gJg^{-1}) = \theta(H, K, J)\forall g\in G$.
\item \label{prop:theta_conj:c}
$\theta(gHg^{-1}, gKg^{-1}, J) = g\theta(H, K, J)g^{-1}$.
\end{enumerate}
\end{proposition}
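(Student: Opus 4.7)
My plan is to handle the three parts in order, with parts (a) and (c) being direct computations, and part (b) reducing to part (a) via an observation about $G$-set isomorphisms.

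\smallskip

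For part (a), I would start by recalling the standard fact that conjugation commutes with the orthogonal projection operator: $P_{A\Gamma A^{-1}} = A P_{\Gamma} A^{-1}$ for any finite matrix group $\Gamma$ and any invertible $A$ (this is immediate from the averaging formula $P_\Gamma = \frac{1}{|\Gamma|}\sum_{g\in\Gamma} g$). Letting $M = P_{\pi_J(K)} - (|H{:}K|-1)P_{\pi_J(H)}$, we get $P_{\pi_J^A(K)} - (|H{:}K|-1)P_{\pi_J^A(H)} = AMA^{-1}$. The stabilizer condition under $\pi_J^A$, namely $\pi_J^A(g)(AMA^{-1}) = AMA^{-1}$, expands to $A\pi_J(g)A^{-1}\cdot AMA^{-1} = AMA^{-1}$, which after canceling $A$ and $A^{-1}$ is equivalent to $\pi_J(g)M = M$. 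Hence the set of $g$ in the stabilizer is unchanged.

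\smallskip

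For part (b), the strategy is to exhibit $\pi_{gJg^{-1}}$ as a conjugate of $\pi_J$ by a permutation matrix and then invoke part (a). Concretely, the map $hJ \mapsto hg^{-1}(gJg^{-1})$ is a well-defined bijection $G/J \to G/gJg^{-1}$ (well-defined because $h^{-1}h' \in J \iff g(h^{-1}h')g^{-1} \in gJg^{-1}$) and is $G$-equivariant with respect to left multiplication. Hence the two ordinary perm-reps are conjugate as $G$-sets, and choosing $A \in \operatorname{P}(|G/J|)$ to implement this relabeling of basis vectors gives $\pi_{gJg^{-1}} = A\pi_J A^{-1} = \pi_J^A$. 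By part (a), $\theta(H,K,gJg^{-1}) = \theta(H,K,J)$.

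\smallskip

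For part (c), I would expand the definition of $\theta(gHg^{-1}, gKg^{-1}, J)$. Using that $|gHg^{-1}:gKg^{-1}| = |H{:}K|$ together with $\pi_J(gSg^{-1}) = \pi_J(g)\pi_J(S)\pi_J(g)^{-1}$ for any subgroup $S$, the matrix appearing in the definition is
\[ P_{\pi_J(gKg^{-1})} - (|H{:}K|-1)P_{\pi_J(gHg^{-1})} = \pi_J(g)\,M\,\pi_J(g)^{-1}, \]
with $M$ as in part (a). The stabilizer condition $\pi_J(h)\,\pi_J(g)M\pi_J(g)^{-1} = \pi_J(g)M\pi_J(g)^{-1}$ rearranges (multiplying by $\pi_J(g)^{-1}$ on the left and $\pi_J(g)$ on the right) to $\pi_J(g^{-1}hg)M = M$. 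Thus $h$ lies in $\theta(gHg^{-1}, gKg^{-1}, J)$ iff $g^{-1}hg \in \theta(H,K,J)$, which is the claimed equivariance.

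\smallskip

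I do not anticipate any substantive obstacle: parts (a) and (c) are essentially one-line identities once the commutation of $P$ with conjugation is in hand, and part (b) is a routine $G$-set observation. The only point requiring care is keeping track that ``$\pi_J$ is defined up to conjugation by a permutation matrix'' is precisely what part (a) is designed to address, so that part (b) is a genuine corollary rather than an independent statement.
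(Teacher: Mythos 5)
Your proposal is correct and follows essentially the same route as the paper's proof: part (a) by conjugation-equivariance of $\Gamma\mapsto P_\Gamma$ and cancellation of $A$, part (b) by reducing to part (a) via the conjugacy of $\pi_{gJg^{-1}}$ and $\pi_J$, and part (c) by setting $A=\pi_J(g)$ and changing variables. The only difference is cosmetic: where the paper cites the standard correspondence between coset actions and conjugation for part (b), you construct the $G$-set isomorphism $hJ\mapsto hg^{-1}(gJg^{-1})$ explicitly, which is a slightly more self-contained version of the same step.
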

\begin{proof}
\textbf{(a) } %
For brevity, let $\kappa = |H:K|-1$. 
Define the function $\theta^A$ in identical fashion to $\theta$, 
but replace $\pi_J$ with $\pi^A_J$. 
We wish to prove $\theta^A = \theta$. 
Since the map $\Gamma\mapsto P_{\Gamma}$ from finite orthogonal matrix groups to orthogonal projection operators is equivariant with respect to conjugation, then we have
\begin{align*}
\theta^A(H, K, J) 
&= \{g\in G: \pi^A_J(g)(P_{\pi_J^A(K)}-\kappa P_{\pi_J^A(H)}) = P_{\pi_J^A(K)}-\kappa P_{\pi_J^A(H)}\} \\
&= \{g\in G: A\pi_J(g)A^{-1}(P_{A\pi_J(K)A^{-1}}-\kappa P_{A\pi_J(H)A^{-1}}) = P_{A\pi_J(K)A^{-1}}-\kappa P_{A\pi_J(H)A^{-1}}\} \\
&= \{g\in G: A\pi_J(g)A^{-1}(AP_{\pi_J(K)}A^{-1}-\kappa AP_{\pi_J(H)}A^{-1}) = AP_{\pi_J(K)}A^{-1} \\
&\quad -\kappa AP_{\pi_J(H)}A^{-1}\} \\
&= \{g\in G: \pi_J(g)(P_{\pi_J(K)}-\kappa P_{\pi_J(H)}) = P_{\pi_J(K)}-\kappa P_{\pi_J(H)}\} \\
&= \theta(H, K, J).
\end{align*}

\textbf{(b) } %
The theory of ordinary perm-irreps and their correspondence to group action on cosets is well-understood~\citep{burnside1911theory, bouc2000burnside}, 
and it is known that the conjugation of $J$ in $\pi_J$ is equivalent to the conjugation of $\pi_J$ itself. 
The claim thus follows by part (a).

\textbf{(c) } %
Let $a\in G$. 
We will show $\theta(aHa^{-1}, aKa^{-1}, J) = \theta(H, K, J)$. 
Note $\kappa = |H:K|-1$ is invariant under the conjugation of $(H, K)$ by $a$. 
Also note $\pi_J(aHa^{-1}) = \pi_J(a)\pi_J(H)\pi_J(a)^{-1}$ and similar for $K$. 
Letting $A = \pi_J(a)$ and proceeding in analogy to part (a), we have
\begin{align*}
\theta(aHa^{-1}, aKa^{-1}, J) 
&= \{g\in G: \pi_J(g)(P_{\pi_J(aKa^{-1})}-\kappa P_{\pi_J(aHa^{-1})}) = P_{\pi_J(aKa^{-1})} \\
&\quad -\kappa P_{\pi_J(aHa^{-1})}\} \\
&= \{g\in G: \pi_J(g)(P_{\pi_J^A(K)}-\kappa P_{\pi_J^A(H)}) = P_{\pi_J^A(K)}-\kappa P_{\pi_J^A(H)}\} \\
&= \{g\in G: \pi_J(g)(AP_{\pi_J(K)}A^{-1}-\kappa AP_{\pi_J(H)}A^{-1}) = AP_{\pi_J(K)}A^{-1} \\
&\quad -\kappa AP_{\pi_J(H)}A^{-1}\} \\
&= \{g\in G: A^{-1}\pi_J(g)A(P_{\pi_J(K)}-\kappa P_{\pi_J(H)}) = P_{\pi_J(K)}-\kappa P_{\pi_J(H)}\} \\
&= \{g\in G: \pi_J(a^{-1}ga)(P_{\pi_J(K)}-\kappa P_{\pi_J(H)}) = P_{\pi_J(K)}-\kappa P_{\pi_J(H)}\}.
\end{align*}
By the change of variables $g\rightarrow aga^{-1}$, we have
\begin{align*}
\theta(aHa^{-1}, aKa^{-1}, J) 
&= \{aga^{-1}\in G: \pi_J(g)(P_{\pi_J(K)}-\kappa P_{\pi_J(H)}) = P_{\pi_J(K)}-\kappa P_{\pi_J(H)}\} \\
&= a\{g\in G: \pi_J(g)(P_{\pi_J(K)}-\kappa P_{\pi_J(H)}) = P_{\pi_J(K)}-\kappa P_{\pi_J(H)}\}a^{-1} \\
&= a\theta(H, K, J)a^{-1},
\end{align*}
establishing the claim.
\end{proof}

\begin{algorithm}
\caption{\label{alg:theta} %
Implementation of the function $\theta$ that exploits existing functions in GAP.
}
\begin{algorithmic}[1]
\Function{$\theta$}{$H$, $K$, $J$}
\Def{$\eta:K\setminus G/J\mapsto \operatorname{power}(G/J)$}
	\State $\eta(KxJ) = \{kxJ\in G/J: k(K\cap xJx^{-1})\in K/(K\cap xJx^{-1})\}$
\EndDef
\If{$|H:K|=2$}
	\State \Let $h\in H\setminus K$
	\State $S\gets \{KxJ\in K\setminus G/J: hx\in KxJ\}$
\Else
	\State $S\gets \emptyset$
\EndIf
\If{$S=\emptyset$}
	\State $T\gets \{\eta(KxJ): KxJ\in K\setminus G/J\}$
\Else
	\State $T\gets \{\eta(KxJ): KxJ\in(K\setminus G/J)\setminus S\}\cup \left\{\bigcup_{KxJ\in S} \eta(KxJ)\right\}$
\EndIf
\State \Return $\st_G(T)$
\EndFunction
\end{algorithmic}
\end{algorithm}

Algorithm~\ref{alg:theta} gives the pseudocode for an implementation of the function $\theta$ that can be accomplished in the GAP language~\citep{anon2021gap} for computational group theory. 
Although the definition of the $\theta$ function involves orthogonal projection operators, 
Alg.~\ref{alg:theta} completely circumvents these operators by taking a pure group-theoretic approach in terms of double cosets. 
The following proposition verifies that Alg.~\ref{alg:theta} is a correct implementation.

\begin{proposition} \label{prop:theta_alg}
Algorithm~\ref{alg:theta} correctly implements the function $\theta$.
\end{proposition}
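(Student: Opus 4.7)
The plan is to translate the linear-algebraic condition defining $\theta(H, K, J)$ into a combinatorial condition on how $\pi_J(g)$ permutes $K$-orbits in $G/J$, and then verify that this matches $\st_G(T)$ as constructed by Algorithm~\ref{alg:theta}. First I would identify what $\eta(KxJ)$ computes: since the map $k(K\cap xJx^{-1}) \mapsto kxJ$ is a bijection onto the $K$-orbit of $xJ$ in $G/J$, the set $\eta(KxJ)$ is exactly that $K$-orbit, and $\{\eta(KxJ) : KxJ \in K\setminus G/J\}$ is the partition of $G/J$ into $K$-orbits.

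Next I would analyze $\pi_J(g)M = M$ where $M = P_{\pi_J(K)} - (|H:K|-1)P_{\pi_J(H)}$. Since $\pi_J(g)$ is orthogonal, this is equivalent to $\pi_J(g)$ acting as the identity on $\ran(M)$, hence to $\pi_J(g)$ fixing each vector of any spanning set. In the case $|H:K|=1$, $\ran(M)$ is spanned by the $K$-orbit indicators $\chi_{KxJ}$, so the condition is that $\pi_J(g)$ fixes every $K$-orbit setwise. In the case $|H:K|=2$, I would use the normality of $K$ in $H$ (immediate from $|H:K|=2$) to show that $Me_{xJ}=0$ precisely when $hx\in KxJ$ (i.e., $KxJ\in S$), and that otherwise $Me_{xJ}$ is a nonzero multiple of $\chi_{KxJ}-\chi_{KhxJ}$, with $KxJ$ and $KhxJ$ distinct disjoint $K$-orbits. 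Thus $\ran(M)$ is spanned by these signed differences, one per $H$-orbit that splits into two $K$-orbits.

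Finally I would match this to the algorithm's output. The crucial step in the $|H:K|=2$ case is that since $\pi_J(g)$ is a \emph{permutation}, fixing $\chi_A-\chi_B$ with $A,B$ disjoint forces $\pi_J(g)A=A$ and $\pi_J(g)B=B$ individually (swapping the two sets would negate the difference). Thus $\theta(H,K,J)$ consists of those $g$ for which $\pi_J(g)$ fixes every $K$-orbit $KxJ$ with $KxJ\notin S$. This is precisely $\st_G(T)$: the non-$S$ $K$-orbits are cells of $T$, while the lumped cell $\bigcup_{KxJ\in S}\eta(KxJ)$ is their complement in $G/J$ and hence is fixed automatically once the other cells are. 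The $|H:K|=1$ case is the same argument with $S=\emptyset$. The main obstacle is the $|H:K|=2$ analysis—explicitly computing $\ran(M)$ and leveraging the permutation (not merely orthogonal) nature of $\pi_J(g)$ to reduce fixing a difference of disjoint indicators to fixing each indicator individually; a secondary subtlety is checking that the single lumped cell in $T$ imposes no constraint beyond those from the non-$S$ cells.
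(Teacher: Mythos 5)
Your proposal is correct and follows essentially the same route as the paper's proof: both identify $\eta(KxJ)$ with the $K$-orbits of $G/J$, show that vectors in $\ran(P_{\pi_J(K)}-(|H:K|-1)P_{\pi_J(H)})$ are constant on $K$-orbits, vanish on the $h$-stable orbits in $S$, and are $h$-antisymmetric otherwise, and conclude that the stabilizer condition amounts to preserving each cell of $T$. The only cosmetic difference is that the paper stabilizes a single generic witness $w$ with distinct nonzero values on the cells, whereas you stabilize an explicit spanning set of indicator (differences), which requires your extra but correct observation that a permutation fixing $\chi_A-\chi_B$ for disjoint $A,B$ must fix $A$ and $B$ individually.
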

\begin{proof}
Observe that the function $\eta$ in Alg.~\ref{alg:theta} sends each double coset $KxJ\in K\setminus G/J$ to the set of cosets in $G/J$ whose disjoint union is $KxJ$.

Let $H,K,J\leq G$ such that $K\leq H$ and $|H:K|\leq 2$. 
Let $w\in\ran(P_{\pi_J(K)}-(|H:K|-1)P_{\pi(H)})$. 
We regard $w$ as a function $w:G/J\mapsto\RR$. 
Since $w\in\ran(P_K)$, then $w$ is $K$-invariant in the sense that $w(kxJ) = w(xJ)$ for all $k\in K$ and $xJ\in G/J$. 
Thus, $w$ is constant over the set $\eta(KxJ)$ for every $KxJ\in K\setminus G/J$.

If $|H:K|=2$, then let $h\in H\setminus K$. 
Since $w\in\ran(P_{\pi_J(K)}-P_{\pi_J(H)})$, then $w(hKxJ) = -w(KxJ)$ for every $KxJ\in K\setminus G/J$. 
Thus, if $hKxJ = KxJ$, then $w(KxJ) = 0$. 
Since $K\trianglelefteq H$, then $hKxJ = KxJ$ is equivalent to $KhxJ = KxJ$, or just $hx\in KxJ$. 
We thus have the constraint
\[ w(KxJ)=0\forall KxJ\in K\setminus G/J\mid hx\in KxJ. \]

Now select $w$ such that it takes a different nonzero value over each $\eta(KxJ)$ for all $KxJ\in K\setminus G/J$ such that, if $|H:K|=2$, then $hx\notin KxJ$. 
Then
\begin{align*}
\theta(H, K, J)
&= \st_G(P_{\pi_J(K)}-(|H:K|-1)P_{\pi_J(H)}) \\
&= \st_G(w) \\
&= \{g\in G: w(gKxJ) = w(KxJ)\forall KxJ\in K\setminus G/J\}.
\end{align*}
This means $\theta(H, K, J)$ is exactly the subgroup of $G$ that leaves the level sets of $w$ invariant. 
Observe, however, that the sets in the collection $T$ in Alg.~\ref{alg:theta} are exactly the level sets of $w$, 
and hence $\theta(H, K, J) = \st_G(T)$.
\end{proof}

\subsubsection{The \texorpdfstring{$\phi$}{phi} function}

\begin{proof}[Proof of Prop.~\ref{thm:phi}]
The necessity of condition (2) is only because if $H^{(1)}_j=G$ for any $j$, then at least one row $w$ of $W^{(1)}$ satisfies $w\in\ran(P_G)$ by Thm.~4a of \citet{agrawal2022classification}. 
For $w$ to be nonzero, we thus require $P_G\neq 0$. 
We assume condition (2) to be satisfied for the remainder of the proof.

Before proving the claim itself, we first derive a closed-form expression for $\phi^{(i+1)}$. 
For notational convenience, for every $K\leq H\leq G$, $|H:K|\leq 2$, and linear rep $\tau:G\mapsto\operatorname{GL}(n, \RR)$, define
\[ \theta(H, K; \tau) = \{g\in G: \tau(g)(P_{\tau(K)}-(|H:K|-1)P_{\tau(H)}) = P_{\tau(K)}-(|H:K|-1)P_{\tau(H)}\}. \]
Thus, $\theta(H, K, J)$ can be equivalently written as $\theta(H, K; \pi_J)$. 
For two reps $\tau_1$ and $\tau_2$, observe that $P_{\tau_1\oplus\tau_2} = P_{\tau_1}\oplus P_{\tau_2}$ and hence
\[ \theta(H, K; \tau_1\oplus\tau_2) = \theta(H, K; \tau_1)\cap\theta(H, K; \tau_2). \]
This property extends to more than two reps in the obvious way. 
With this notation, and recalling the identity rep $\pi^{(0)}:G\mapsto G$, the function $\phi^{(i+1)}$ can be rewritten as
\begin{align*}
\phi^{(i+1)}(H, K) 
&= \st_G(P_K-(|H:K|-1)P_H)\cap \bigcap_{j=1}^i \bigcap_{r=1}^{r^{(j)}} \theta(H, K, H^{(j)}_r) \\
&= \theta(H, K; \pi^{(0)})\cap \bigcap_{j=1}^i \bigcap_{r=1}^{r^{(j)}} \theta(H, K; \pi^{(j)}_r) \\
&= \theta(H, K; \pi^{(0)})\cap \theta\left(H, K; \bigoplus_{j=1}^i \bigoplus_{r=1}^{r^{(j)}} \pi^{(j)}_r\right) \\
&= \theta\left(H, K; \pi^{(0)}\oplus\bigoplus_{j=1}^i \pi^{(j)}\right) \\
&= \theta\left(H, K; \bigoplus_{j=0}^i \pi^{(j)}\right) \\
&= \theta(H, K; \Pi^{(i)}).
\end{align*}
This is explicitly
\begin{align*}
\phi^{(i+1)}(H, K) 
&= \{g\in G: \Pi^{(i)}(g)(P_{\Pi^{(i)}(K)}-(|H:K|-1)P_{\Pi^{(i)}(H)}) = P_{\Pi^{(i)}(K)} \\
&\quad -(|H:K|-1)P_{\Pi^{(i)}(H)}\},
\end{align*}
and it is equivalent to
\begin{align*}
\phi^{(i+1)}(H, K) 
&= \{g\in G: \Pi^{(i)}(g)(P_{\Pi^{(i)}(K)}-(|\Pi^{(i)}(H):\Pi^{(i)}(K)|-1)P_{\Pi^{(i)}(H)}) = P_{\Pi^{(i)}(K)} \\
&\quad -(|\Pi^{(i)}(H):\Pi^{(i)}(K)|-1)P_{\Pi^{(i)}(H)}\}. \tag{*}
\end{align*}
To see that $|\Pi^{(i)}(H):\Pi^{(i)}(K)| = |H:K|$, by the First Isomorphism Theorem we have
\[ |\Pi^{(i)}(H):\Pi^{(i)}(K)| = \frac{|H|/|H\cap\ker(\Pi^{(i)})|}{|K|/|K\cap\ker(\Pi^{(i)})|}. \]
Since $\Pi^{(i)}$ includes in its direct sum decomposition the identity rep $\pi^{(0)}$, then $\ker(\Pi^{(i)})$ must be trivial, 
and so the above expression reduces to $|H|/|K| = |H:K|$.

We finally prove the claim. 
By definition, the $G$-DNN architecture $\{\rho^{(1)}, \ldots, \rho^{(d)}\}$ is admissible iff each row of $W^{(i)}$ is nonzero and no two rows of $[W^{(i)}\mid b^{(i)}]$ are parallel, for $i\in\{1,\ldots,d\}$. 
By Thm.~\myref{thm:V}{a}, since $W^{(i)} = V^{(i)}A^{(i-1)}$, then we obtain an equivalent definition if we replace $W^{(i)}$ with $V^{(i)}$. 
Let $V^{(i)}_{(j)}$ be the submatrix comprising the rows (and all columns) of $V^{(i)}$ that together transform by $\rho^{(i)}_j$:
\[ \rho^{(i)}_j(g)V^{(i)}_{(j)} = V^{(i)}_{(j)}\Pi^{(i-1)}(g)\forall g\in G. \]
(We include the parentheses in the subscript of $V^{(i)}_{(j)}$ to distinguish it from $V^{(i)}_j$ appearing in Thm.~\myref{thm:V}{a}). 
Then Thm.~4a of \citet{agrawal2022classification} implies we have an admissible architecture (specifically, that the rows of $V^{(i)}_{(j)}$ are nonzero and no two rows of the corresponding augmented weight matrix are parallel) iff
\[ \st_{\Pi^{(i-1)}(G)}(P_{\Pi^{(i-1)}(K^{(i)}_j)} - (|\Pi^{(i-1)}(H^{(i)}_j):\Pi^{(i-1)}(K^{(i)}_j)|-1) P_{\Pi^{(i-1)}(H^{(i)}_j)}) = \Pi^{(i-1)}(K^{(i)}_j), \]
or equivalently,
\[ (\Pi^{(i-1)})^{-1}[\st_{\Pi^{(i-1)}(G)}(P_{\Pi^{(i-1)}(K^{(i)}_j)} - (|\Pi^{(i-1)}(H^{(i)}_j):\Pi^{(i-1)}(K^{(i)}_j)|-1) P_{\Pi^{(i-1)}(H^{(i)}_j)}) = K^{(i)}_j \]
\begin{align*}
&\{g\in G: \Pi^{(i-1)}(g)(P_{\Pi^{(i-1)}(K^{(i)}_j)} - (|\Pi^{(i-1)}(H^{(i)}_j):\Pi^{(i-1)}(K^{(i)}_j)|-1) P_{\Pi^{(i-1)}(H^{(i)}_j)}) = P_{\Pi^{(i-1)}(K^{(i)}_j)} \\
&\quad - (|\Pi^{(i-1)}(H^{(i)}_j):\Pi^{(i-1)}(K^{(i)}_j)|-1) P_{\Pi^{(i-1)}(H^{(i)}_j)} = K^{(i)}_j.
\end{align*}
Recalling (*), we recognize the last equation as nothing but $\phi^{(i)}(H^{(i)}_j, K^{(i)}_j) = K^{(i)}_j$, there proving that condition (1) is (together with condition (2)) is equivalent to admissibility.
\end{proof}

\subsection{Additional remarks}
\label{app:remarks}

The following proposition establishes the compatibility of batchnorm with $G$-DNNs.

\begin{proposition} \label{prop:bn}
The addition of batchnorm immediately after any ReLU layer in a $G$-DNN preserves $G$-invariance of the network.
\end{proposition}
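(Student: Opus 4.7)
The plan is to prove that inserting BN immediately after the ReLU in any layer $i-1$ of a $G$-DNN can be re-written as a new $G$-DNN with modified but still equivariant latent weights and biases, so that Lemma~\ref{lemma:psi} and Thm.~\ref{thm:V} continue to guarantee $G$-invariance. Regard BN at inference time as the affine map $r\mapsto Dr+b'$, with $D=\diag(\gamma/\sigma)$ and $b'=\beta-\gamma\mu/\sigma$. First I would show that, under a $G$-invariant data distribution, the population mean $\mu$ and standard deviation $\sigma$ of $r^{(i-1)}$ are $\pi^{(i-1)}$-invariant (i.e., constant within each $\pi^{(i-1)}$-orbit). The crux is that $\mathbb{E}[g^{(i-1)}]=V^{(i-1)}\mathbb{E}[h^{(i-1)}]$ is $\rho^{(i-1)}$-invariant and therefore vanishes on every type 2 irreducible block; substituting into $r^{(i-1)}(gx)=\pi^{(i-1)}(g)[r^{(i-1)}(x)-\heavi(-\zeta^{(i-1)}(g))(g^{(i-1)}(x)+b^{(i-1)})]$ and taking expectations then makes the correction term disappear. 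With $\gamma,\beta$ constrained to be tied within $\pi^{(i-1)}$-orbits (the natural equivariance constraint on the learnable BN parameters), both $D$ and $b'$ become $\pi^{(i-1)}$-invariant as well.

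Next I would absorb BN into the subsequent layer's weight matrix. Fact (1) in the statement says the first $n_i$ columns $W^{(i)}_i=V^{(i)}_i$ of layer $i$ multiply $r^{(i-1)}$ in both the $W$- and $V$-parameterizations, so replacing $r^{(i-1)}$ by $Dr^{(i-1)}+b'$ is equivalent to substituting $V^{(i)}_i\mapsto V^{(i)}_i D$ and $b^{(i)}\mapsto b^{(i)}+V^{(i)}_i b'$. The equivariance $\rho^{(i)}(g)(V^{(i)}_i D)=(V^{(i)}_i D)\pi^{(i-1)}(g)$ then follows from $D$ commuting with $\pi^{(i-1)}(g)$, and the invariance $\rho^{(i)}(g)(V^{(i)}_i b')=V^{(i)}_i b'$ follows from the $\pi^{(i-1)}$-invariance of $b'$ combined with the equivariance of $V^{(i)}_i$. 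The analogous absorption is performed for every downstream layer $\ell>i$ that receives $r^{(i-1)}$ through a skip connection, i.e., for the block $V^{(\ell)}_i$.

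The delicate case is when $\rho^{(i)}$ contains a type 2 irreducible; the equivariance constraint then forces $b^{(i)}=0$ on that component, so the shift $V^{(i)}_i b'$ had better vanish as well. This is exactly where Fact (2) enters: $b'$ is $\pi^{(i-1)}$-invariant and hence lies in the span of all-ones vectors on each irreducible subspace of $\rho^{(i-1)}$; the blockwise version of Fact (2) (namely $V^{(i)}_i v=0$ for every $\pi^{(i-1)}$-invariant $v$, whenever the image lies in a type 2 irreducible of $\rho^{(i)}$) then forces $V^{(i)}_i b'=0$ on the offending component, preserving the bias constraint. A parallel argument handles the analogous shift for each downstream block $V^{(\ell)}_i$.

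The main obstacle I anticipate is reconciling BN with the skip-correction $-\tfrac{1}{2}g^{(i-1)}$ hidden inside the first block $q^{(i-1)}$ of $h^{(i)}$. Because BN rescales $r^{(i-1)}$ but not the correction, the naive absorption leaves a residual $\tfrac{1}{2}(D-I)g^{(i-1)}=\tfrac{1}{2}(D-I)V^{(i-1)}h^{(i-1)}$ in the first block, which must be re-distributed into the lower-index blocks $V^{(\ell)}_j$ for $j<i$. I would check that each re-distributed increment $\tfrac{1}{2}V^{(\ell)}_i(D-I)V^{(i-1)}_j$ still satisfies the equivariance condition on $V^{(\ell)}_j$, using that $V^{(i-1)}_j$ is equivariant, $D$ is $\pi^{(i-1)}$-invariant, and that Fact (2) again kills any residual pieces landing on type 2 components. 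Verifying that every residual either vanishes on type 2 components or passes cleanly through the equivariance constraints on type 1 components is the core calculation of the proof.
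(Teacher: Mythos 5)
Your absorption skeleton---push the post-ReLU affine map into the next layer's weight block and bias, use $W^{(i)}_i=V^{(i)}_i$ to transfer the rescaling to the latent weights, and use $V^{(i)}_i\vec{1}=0$ on type 2 components to kill the induced bias shift---is exactly the paper's. But there are two genuine gaps. First, the statistics. The paper takes $\mu,\sigma,\gamma,\beta$ to be \emph{scalars} shared across the whole channel (the analogue of pooling batchnorm statistics over spatial locations in a convolutional layer), so the affine map is $r\mapsto Cr+D\vec{1}$ with scalar $C,D$ by definition, and no property of the data ever enters. You instead take per-neuron statistics $D=\diag(\gamma/\sigma)$ and try to derive their orbit-constancy from a $G$-invariant data distribution. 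That is an additional hypothesis the proposition does not make; even granting it, empirical minibatch statistics are only orbit-constant in expectation, so exact $G$-invariance is lost during training; and your argument only controls the first moment---the vanishing of $\mathbb{E}[g^{(i-1)}]$ on type 2 blocks does not make the ReLU correction disappear from the \emph{second} moment (there you would need that the type 2 preactivations are sign-symmetric in distribution, which you never establish).

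Second, the residual you flag in your last paragraph is real, but the repair you sketch does not close. The redistributed increment $\tfrac{1}{2}V^{(\ell)}_i(D-I)V^{(i-1)}_j$ satisfies $\rho^{(\ell)}(g)\,V^{(\ell)}_i(D-I)V^{(i-1)}_j = V^{(\ell)}_i(D-I)\,\pi^{(i-1)}(g)V^{(i-1)}_j$, but the block $V^{(i-1)}_j$ intertwines $\rho^{(i-1)}$, not $\pi^{(i-1)}$; these two differ by sign flips precisely on the type 2 components of $\rho^{(i-1)}$, so the increment fails the required equivariance exactly where it matters, and Fact (2) cannot rescue it because that fact only annihilates multiples of $\vec{1}$, not these matrix-valued residuals. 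The clean resolution, which is what makes the paper's two Facts sufficient, is to normalize the \emph{entire} first block of $h^{(i)}$, i.e.\ the quantity $\relu(g^{(i-1)}+b^{(i-1)})-\tfrac{1}{2}g^{(i-1)}$: this block already transforms by the pure permutation $\pi^{(i-1)}$ with no correction term (since $\relu(y)-\tfrac{1}{2}y$ is even in $y$), so a per-channel affine map commutes with the symmetry, touches only $V^{(i)}_i$ and $b^{(i)}$, produces no residual in the lower-index blocks, and the two Facts then finish the proof as in your second and third paragraphs.
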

\begin{proof}
Suppose we apply batchnorm immediately after the $i$th ReLU layer of the $G$-DNN $f$, for some $i\in\{1,\ldots,d-1\}$. 
Then the activations $r^{(i)}(x) = \relu(W^{(i)}f^{(i)}(x)+b^{(i)})$ of the $i$th ReLU layer transform as
\[ r^{(i)}(x) \rightarrow \gamma \left(\frac{r^{(i)}(x)-\mu\vec{1}}{\sigma+\eps}\right) + \beta\vec{1}, \]
where $\mu\geq 0$, $\sigma\geq 0$, $\eps>0$, $\gamma$, and $\beta$ are all scalars. 
For each $i\in\{1,\ldots,d\}$, let $W^{(i)}_i$ and $V^{(i)}_i$ be the blocks comprising the first $n_i$ columns of $W^{(i)}$ and $V^{(i)}$ respectively; 
these represent the weights of the $i$th layer without the skip connections. 
Then the above affine transformation of $r^{(i)}(x)$ is equivalent to the transformation
\begin{align*}
W^{(i+1)}_{i+1} &\rightarrow C W^{(i+1)}_{i+1} \\
b^{(i+1)} &\rightarrow b^{(i+1)} + D W^{(i+1)}_{i+1} \vec{1},
\end{align*}
where
\begin{align*}
C &= \frac{\gamma}{\sigma+\eps} \\
D &= \beta - \frac{\gamma\mu}{\sigma+\eps}.
\end{align*}
By Thm.~\myref{thm:V}{a}, $W^{(i+1)} = V^{(i+1)}A^{(i)}$. 
Since $A^{(i)}$ is upper block-triangular with the top left block $I_{n_{i+1}}$, 
then $W^{(i+1)}_{i+1} = V^{(i+1)}_{i+1}$. 
By the above transformation of $W^{(i+1)}_{i+1}$ under batchnorm, $V^{(i+1)}_{i+1}$ also transforms only by the scalar factor $C$, 
and hence it remains equivariant as in Thm.~\ref{thm:V}{a}.

All that remains is to show the bias $b^{(i+1)}$ satisfies the sufficient condition in Lemma~\ref{lemma:psi} even after the batchnorm transformation, and this will establish $G$-invariance of the network with batchnorm.

\textbf{Case 1: } %
Suppose $\rho^{(i+1)}$ is type 1 irreducible. 
Then Lemma~\ref{lemma:psi} implies $b^{(i+1)}$ is parallel to $\vec{1}$. 
Under batchnorm, $b^{(i+1)}$ transforms by the addition of $DV^{(i+1)}_{i+1}\vec{1}$ and thus remains parallel to $\vec{1}$. 
The claim follows by Lemma~\ref{lemma:psi}.

\textbf{Case 2: } %
Suppose $\rho^{(i+1)}$ is type 2 irreducible. 
Then Lemma~\ref{lemma:psi} implies $b^{(i+1)} = 0$. 
Under batchnorm, the bias thus transforms to $0+DV^{(i+1)}_{i+1}\vec{1}$. 
It turns out, however, that $V^{(i+1)}_{i+1}\vec{1}=0$; 
to see this, by Thm.~\myref{thm:V}{a}, we have
\[ \rho^{(i+1)}(g)V^{(i+1)}_{i+1} = V^{(i+1)}_{i+1} \pi^{(i)}(g)\forall g\in G. \]
Since $i\geq 1$ so that $\pi^{(i)}$ is type 1, then without loss of generality, by selecting an appropriate basis, we assume $\pi^{(i)}$ is an ordinary perm-irrep. 
Averaging both sides over all $g\in G$, we obtain
\[ P_{\rho^{(i+1)}} V^{(i+1)}_{i+1} = V^{(i+1)}_{i+1} P_{\pi^{(i)}}. \]
Since $\rho^{(i+1)}$ is type 2, then its only fixed point is the zero vector, and hence the orthogonal projection operator $P_{\rho^{(i+1)}}$ is itself zero. 
Moreover, since $\pi^{(i)}$ is an ordinary perm-irrep and since $\vec{1}$ is fixed under all permutations, then it is fixed under the orthogonal projection operator $P_{\pi^{(i)}}$ as well-- 
hence $V^{(i+1)}_{i+1}\vec{1} = 0$.

\textbf{Case 3: } %
Suppose $\rho^{(i+1)}$ is reducible. 
Then decompose it into type1 and type 2 irreps and apply Cases 1-2 separately to each irrep.
\end{proof}

The extension of Prop.~\ref{prop:bn} to multiple channels is trivial. 
Batchnorm is typically applied independently to each channel. 
Thus, if the ReLU activations $r^{(i)}(x)$ had $c$ channels (which could be achieved by having $c$ copies of every irrep in $\rho^{(i)}$), 
then the variables $\mu$, $\sigma$, $\gamma$, and $\beta$ would be $c$-dimensional vectors. 
The proof would then proceed by selecting a single arbitrary channel.

\section{Examples}
\label{app:examples}

\subsection{Concatenated ReLU}

\begin{proof}[Proof of Ex.~\ref{ex:crelu}] 
For every $i\in\{1,\ldots,d\}$, the function
\[ x\rightarrow U^{(i)}\crelu(U^{(i-1)}\cdots\crelu(U^{(1)}x)\cdots) \]
is $G$-equivariant with its output transforming by $\rho^{(i)}$. 
We thus identify it with the function $g^{(i)}$ appearing in Thm.~\myref{thm:V}{b}. 
We now have the recursion
\[ g^{(i+1)}(x) = U^{(i+1)}\crelu( g^{(i)}(x) )\forall i\in\{1,\ldots,d-1\}. \]
By definition of $\crelu(\cdot)$ and the block structure of $U^{(i+1)}$, we have
\begin{align*}
g^{(i+1)}(x) 
&= \lmat U^{(i+1)}_1 & U^{(i+1)}_2 \rmat \relu\left( \lmat g^{(i)}(x) \\ -g^{(i)}(x) \rmat \right) \\
&= U^{(i+1)}_1 \relu( g^{(i)}(x) ) + U^{(i+1)}_2 \relu( -g^{(i)}(x) ) \\
&= U^{(i+1)}_1 \relu( g^{(i)}(x) ) + U^{(i+1)}_2 \relu( g^{(i)}(x) ) - U^{(i+1)}_2 g^{(i)}(x) \\
&= (U^{(i+1)}_1+U^{(i+1)}_2) \relu( g^{(i)}(x) ) - U^{(i+1)}_2 g^{(i)}(x) \\
&= (U^{(i+1)}_1+U^{(i+1)}_2) \relu( g^{(i)}(x) ) - \frac{1}{2}(U^{(i+1)}_1+U^{(i+1)}_2) g^{(i)}(x) + \frac{1}{2}(U^{(i+1)}_1-U^{(i+1)}_2) g^{(i)}(x) \\
&= \lmat U^{(i+1)}_1+U^{(i+1)}_2 & \frac{1}{2}(U^{(i+1)}_1-U^{(i+1)}_2) \rmat \lmat \relu(g^{(i)}(x)) - \frac{1}{2}g^{(i)}(x) \\ g^{(i)}(x) \rmat.
\end{align*}
Since $g^{(i)}(x) = V^{(i)} h^{(i)}(x)$ in Thm.~\myref{thm:V}{b} and since all bias vectors in this example are zero, then we can write the above as
\[ g^{(i+1)}(x) = \lmat U^{(i+1)}_1+U^{(i+1)}_2 & \frac{1}{2}(U^{(i+1)}_1-U^{(i+1)}_2)V^{(i)} \rmat \lmat \relu(g^{(i)}(x)+b^{(i)}) - \frac{1}{2}g^{(i)}(x) \\ h^{(i)}(x) \rmat. \]
Comparing this to the definitions of $g^{(i+1)}$ and $h^{(i+1)}$ appearing in Thm.~\myref{thm:V}{b}, we obtain the  expression for $V^{(i+1)}$ as claimed.
\end{proof}

\subsection{Binary multiplication}

\begin{proof}[Proof of Ex.~\ref{ex:prod}] 
\textbf{(a) } %
For $x_1,x_2\in \{-1, 1\}$, it can be verified by hand that
\begin{align*}
x_1x_2 
&= \lmat 1 & -1\rmat \relu\left( \lmat 1 & 1 \\ 1 & -1 \rmat \lmat x_1 \\ x_2 \rmat \right) - x_2 \\
&= \lmat 1 & -1 & 0 & -1 \rmat \lmat \relu\left( \lmat 1 & 1 \\ 1 & -1 \rmat \lmat x_1 \\ x_2 \rmat \right) \\ \lmat x_1 \\ x_2 \rmat \rmat. (*)
\end{align*}
We can extend this to the product of more than two elements as follows: 
For $i\in\{1,\ldots,d\}$, define the block-diagonal matrices
\begin{align*}
A^{(i)} &= I_{m/2^i} \lmat 1 & -1 \rmat \\
B^{(i)} &= I_{m/2^i} \lmat 0 & -1 \rmat \\
C^{(i)} &= I_{m/2^i} \lmat 1 & 1 \\ 1 & -1 \rmat,
\end{align*}
and define the functions $p^{(i)}:\mathcal{X}\mapsto \{-1, 1\}^{m/2^i}$ by
\begin{align*}
p^{(1)}(x) &= A^{(1)} x \\
p^{(i)}(x) &= \lmat A^{(i)} & B^{(i)} \rmat \lmat \relu( C^{(i)} p^{(i-1)}(x)) \\ p^{(i-1)}(x) \rmat\forall i\in\{2,\ldots,d\}. (**)
\end{align*}
The function $p^{(1)}$ maps each pair of elements $(x_{2i-1}, x_{2i})$ in the input $x$ to $x_{2i-1}-x_{2i}\in\{-1, 1\}$. 
Then $p^{(i)}$ 
(1) partitions $p^{(1)}(x)$ into blocks, each of two elements; 
(2) takes the product of each pair of elements using (*); 
(3) iterates this procedure $i-1$ times. 
The output $p^{(d)}(x)$ is then the product of the elements in $p^{(1)}(x)$.

Now for $i\in\{1,\ldots,d-1\}$, the function $g^{(i)}$ (as in Thm.~\myref{thm:V}{b}) as given by $g^{(i)}(x) = C^{(i+1)} p^{(i)}(x)$. 
By (**), we thus have
\begin{align*}
g^{(i)}(x) 
&= \lmat C^{(i+1)}A^{(i)} & C^{(i+1)}B^{(i)} \rmat \lmat \relu( g^{(i-1)}(x)) \\ p^{(i-1)}(x) \rmat \\
&= \lmat C^{(i+1)}A^{(i)} & C^{(i+1)}B^{(i)} \rmat \lmat \relu( g^{(i-1)}(x)) - \frac{1}{2}g^{(i-1)}(x) + \frac{1}{2}g^{(i-1)}(x) \\ p^{(i-1)}(x) \rmat \\
&= \lmat C^{(i+1)}A^{(i)} & C^{(i+1)}B^{(i)} \rmat \lmat \relu( g^{(i-1)}(x)) - \frac{1}{2}g^{(i-1)}(x) \\ p^{(i-1)}(x) \rmat \\
&\quad + \lmat C^{(i+1)}A^{(i)} & C^{(i+1)}B^{(i)} \rmat \lmat \frac{1}{2}g^{(i-1)}(x) \\ 0 \rmat \\
&= \lmat C^{(i+1)}A^{(i)} & C^{(i+1)}B^{(i)} \rmat \lmat \relu( g^{(i-1)}(x)) - \frac{1}{2}g^{(i-1)}(x) \\ p^{(i-1)}(x) \rmat 
+ \frac{1}{2}C^{(i+1)}A^{(i)} g^{(i-1)}(x) \\
&= \lmat C^{(i+1)}A^{(i)} & C^{(i+1)}B^{(i)} \rmat \lmat \relu( g^{(i-1)}(x)) - \frac{1}{2}g^{(i-1)}(x) \\ p^{(i-1)}(x) \rmat \\
&\quad + \frac{1}{2}C^{(i+1)}A^{(i)} C^{(i)} p^{(i-1)}(x) \\
&= \lmat C^{(i+1)}A^{(i)} & C^{(i+1)}B^{(i)} + \frac{1}{2}C^{(i+1)}A^{(i)} C^{(i)} \rmat \lmat \relu( g^{(i-1)}(x)) - \frac{1}{2}g^{(i-1)}(x) \\ p^{(i-1)}(x) \rmat \\
&= \lmat C^{(i+1)}A^{(i)} & 0 \rmat \lmat \relu( g^{(i-1)}(x)) - \frac{1}{2}g^{(i-1)}(x) \\ p^{(i-1)}(x) \rmat \\
&= \lmat C^{(i+1)}A^{(i)} & 0 \rmat \lmat \relu( g^{(i-1)}(x)) - \frac{1}{2}g^{(i-1)}(x) \\ h^{(i-1)}(x) \rmat.
\end{align*}
Comparing this to the equations in Thm.~\myref{thm:V}{b}, we establish the claimed expression for $V^{(i)}_j$ for $i\in\{1,\ldots,d-1\}$.

For the case $i=d$, we simply observe that the last weight matrix must be the outer weight vector in (*). 
The $G$-invariance of the constructed DNN is clear; 
the action of any $g\in G$ on an input $x\in\mathcal{X}$ corresponds to an even number of sign flips in $p^{(1)}(x)$, 
which leaves the parity of the product $p^{(d)}(x)$ invariant. 
Layerwise $G$-equivariance is established next.

\textbf{(b) } %
By part (a), The first weight matrix is
\[ C^{(2)}A^{(1)} = I_{m/4}\otimes \lmat 1 & -1 & 1 & -1 \\ 1 & -1 & -1 & 1 \rmat. \]
The $j$th pair of rows in this weight matrix is the $j$th channel, 
whose first row is $v_j$ and which transforms by $\rho_{H^{(1)}_j K^{(1)}_j}$. 
The expressions for $H^{(1)}_j$ and $K^{(1)}_j$ are thus established by definition.

Now for $i\in\{1,\ldots,d\}$, part (a) implies
\[ \rho_{H^{(i+1)}_j K^{(i+1)}_j}(g) \lmat 1 & -1 & 1 & -1 \\ 1 & -1 & -1 & 1 \rmat = \lmat 1 & -1 & 1 & -1 \\ 1 & -1 & -1 & 1 \rmat \lmat \pi_{H^{(i)}_{2j-1}}(g) & 0 \\ 0 & \pi_{H^{(i)}_{2j}}(g) \rmat\forall g\in G, \]
where each $\pi_{H^{(i)}_k}$ is the ordinary perm-rep part of $\rho_{H^{(i)}_k K^{(i)}_k}$. 
By definition, $K^{(i+1)}_j$ is the subgroup of all $g\in G$ such that
\begin{align*}
\rho_{H^{(i+1)}_jK^{(i+1)}_j}(g)e_1 &= e_1 \\
\frac{1}{2} \rho_{H^{(i+1)}_j K^{(i+1)}_j}(g) \lmat 1 & -1 & 1 & -1 \\ 1 & -1 & -1 & 1 \rmat \lmat 1 \\ 0 \\ 1 \\ 0 \rmat &= e_1 \\
\frac{1}{2} \lmat 1 & -1 & 1 & -1 \\ 1 & -1 & -1 & 1 \rmat \lmat \pi_{H^{(i)}_{2j-1}}(g) & 0 \\ 0 & \pi_{H^{(i)}_{2j}}(g) \rmat\forall g\in G \lmat 1 \\ 0 \\ 1 \\ 0 \rmat &= e_1.
\end{align*}
Since ordinary perm-reps cannot flip signs, then the last equation is equivalent to
\[ \pi_{H^{(i)}_{2j-1}} e_1 = e_1 \mbox{ and } \pi_{H^{(i)}_{2j}} e_1 = e_1. \]
We thus have
\begin{align*}
K^{(i+1)}_j 
&= \{g\in G: \pi_{H^{(i)}_{2j-1}}e_1 = e_1\} \cap \{g\in G: \pi_{H^{(i)}_{2j}}e_1 = e_1\} \\
&= H^{(i)}_{2j-1} \cap H^{(i)}_{2j}.
\end{align*}

Similarly, by definition we have
\[ H^{(i+1)}_j = \{g\in G: \rho_{H^{(i+1)}_jK^{(i+1)}_j}(g)e_1 = \pm e_1\}. \]
Proceeding analogously as above, we find that $g\in G$ is contained in $H^{(i+1)}_j$ iff 
\[ (\pi_{H^{(i)}_{2j-1}}(g)e_1 = e_1 \mbox{ and } \pi_{H^{(i)}_{2j}}(g)e_1 = e_1) 
\mbox{ or } 
(\pi_{H^{(i)}_{2j-1}}(g)e_1 = e_2 \mbox{ and } \pi_{H^{(i)}_{2j}}(g)e_1 = e_2). \]
The first term in the disjunction corresponds to $H^{(i)}_{2j-1}\cap H^{(i)}_{2j}$, 
and the second term in the disjunction corresponds to the intersection of the complements, as claimed.

In the case of the $(d-1)$st layer, the product output $p^{(d-1)}(x)$ is 2D and thus can only transform by $\pm 1$. 
The rep $\rho^{(d-)}$ thus decomposes into two copies of a scalar rep. 
Finally, that the final rep $\rho^{(d)}$ is trivial follows from the $G$-invariance of the network, 
thereby completing the proof.
\end{proof}


\vskip 0.2in
\bibliography{references}

\end{document}